\newcommand{\actions}{A}
\newcommand{\states}{X}
\newcommand{\argmax}{\operatorname*{argmax}}
\newcommand{\poppol}{\widetilde\pi}
\newcommand{\Poppol}{\widetilde\Pi}
\newcommand{\avgpoppol}{\overline{\widetilde\pi}}
\newcommand{\masterpol}{\widetilde\pi^*}
\newcommand{\spol}{\pi} %
\newcommand{\pol}{\bspi} %
\newcommand{\sPol}{\Pi} %
\newcommand{\Pol}{\bsPi} %
\newcommand{\avgpol}{\overline\pi}
\newcommand{\brpol}{\hat\pi}
\newcommand{\mf}{\bsmu}
\newcommand{\smf}{\mu}
\newcommand{\MF}{\bfM}
\newcommand{\sMF}{M}
\newcommand{\trainset}{{\mathcal{M}}}
\newtheorem{theorem}{Theorem}
\theoremstyle{definition}
\newtheorem{definition}{Definition}
\newcommand{\BR}{\text{BR}}
\newcommand{\bfM}{{\mathbf M}}
\newcommand{\cE}{{\mathcal E}}
\newcommand{\bbE}{{\mathbb E}}
\newcommand{\bbN}{{\mathbb N}}
\newcommand{\bbR}{{\mathbb R}}
\newcommand{\EE}{{\bbE}}
\newcommand{\NN}{{\bbN}}
\newcommand{\RR}{{\bbR}}
\newcommand{\bsmu}{{\boldsymbol{\mu}}}
\newcommand{\bspi}{{\boldsymbol{\pi}}}
\newcommand{\bsPi}{{\boldsymbol{\Pi}}}
\title{Generalization in Mean Field Games by Learning Master Policies}
\date{}
\author{%
    Sarah Perrin$^{1,\star}$,
    Mathieu Lauri{\`e}re$^{2,\star}$,
    Julien P{\'e}rolat$^3$,
    Romuald {\'E}lie$^3$,
    Matthieu Geist$^{2,\dagger}$,
    Olivier Pietquin$^{2,\dagger}$%
  }
  \renewcommand\footnotemark{}
\date{
  $^1$Univ. Lille, CNRS, Inria, Centrale Lille, UMR 9189 CRIStAL%
  \\
  $^2$Google Research, Brain Team%
  \\
  $^3$DeepMind Paris%
\thanks{$^\star$ equal contribution, $^\dagger$ equal contribution}
}
\begin{document}

\maketitle

\begin{abstract}

Mean Field Games (MFGs) can potentially scale multi-agent systems to extremely large populations of agents. Yet, most of the literature assumes a single initial distribution for the agents, which limits the practical applications of MFGs. Machine Learning has the potential to solve a wider diversity of MFG problems thanks to generalizations capacities. We study how to leverage these generalization properties to learn policies enabling a typical agent to behave optimally against any population distribution. In reference to the Master equation in MFGs, we coin the term ``Master policies'' to describe them and we prove that a single Master policy provides a Nash equilibrium, whatever the initial distribution. We propose a method to learn such Master policies. Our approach relies on three ingredients: adding the current population distribution as part of the observation, approximating Master policies with neural networks, and  training via  Reinforcement Learning and Fictitious Play. We illustrate on numerical examples not only the efficiency of the learned Master policy but also its generalization capabilities beyond the distributions used for training. 
\end{abstract}

\section{Introduction}

Although learning in games has a long history~\citep{Shannon59Chess,samuel1959some}, most of recent breakthroughs  remain limited to a small number of players, \textit{e.g.}, for chess~\citep{campbell2002deep}, Go~\citep{silver2016mastering,silver2017mastering, Silver18AlphaZero}, poker~\citep{brown2018superhuman,moravvcik2017deepstack} or even video games such as Starcraft~\citep{vinyals2019grandmaster} with a large number of agents but only a handful of competing players. The combination of game theory with multi-agent reinforcement learning has proved to be efficient~\citep{lanctot2017unified}, but learning in games involving a large number of players remains very challenging. 
Recently, Mean Field Games (MFGs), introduced concurrently by \citet{lasry2007mean} and \citet{huang2006large}, have been considered as a promising approach to address this problem. They indeed model games with an infinite number of players. Instead of taking into account interactions between individuals, MFGs model the interaction between a so-called representative agent (sampled from the population distribution) and the full population itself. As in many multi-player games, solving an MFG boils down to finding a Nash equilibrium. Intuitively, it corresponds to a situation where no player can increase their reward (or decrease their cost) by changing their strategy, given that other players keep their current behavior. MFGs are classically described with a forward-backward system of partial differential equations (PDEs) or stochastic differential equations (SDEs) and can only be solved analytically in some specific cases. When an analytical solution is not available, numerical methods such as finite differences can be called to solve the PDE system. However, these techniques do not scale well with the dimensions of the state and action spaces. Another issue with PDE methods is that they  are very sensitive to initial conditions. Especially, the policy obtained is only valid for a single initial distribution $\smf_0$ for the population over the state space. This is a strong limitation for practical applications. For example, in an evacuation or traffic-flow scenario, the solution found by a PDE solver could potentially lead to an unforeseen congestion if the agents are not initially distributed as the model expected. This could have dramatic consequences. On the other hand, solving for every possible initial distribution is of course infeasible. Following the traditional trend in the  literature, even solutions to MFGs that use most recent Machine Learning methods consider that the initial distribution is fixed and thus compute policies that are agnostic to the current population. A sensible idea to alleviate the sensitivity issue is to incorporate the population as part of the observation for the representative agent, such that it can behave optimally against the population, and not only w.r.t. its current state. Yet, using such a modification of the observation cannot be done seamlessly as the uniqueness of the initial distribution is a core assumption of existing methods, including very recent ones based on Machine Learning. 

Here we do a first crucial step in this direction using Deep Reinforcement Learning (Deep RL), which sounds particularly well fitted to overcome the aforementioned difficulty. Our core contribution is to propose the first Deep RL algorithm that calculates an optimal policy independently of the initial population distribution. 

\textbf{Main contributions. } First, we extend the basic framework of MFGs by introducing a class of \emph{population-dependent policies} enabling agents to react to any population distribution. Within this class, we identify a \emph{Master policy} and establish its connection with standard population-agnostic policies arising in MFG Nash equilibria (Thm.~\ref{thm:usual-to-master}). Second, we propose an algorithm, based on Fictitious Play and Deep RL, to learn a Master policy. We analyze a continuous time version of Fictitious Play and prove convergence at a linear rate (Thm.~\ref{thm:convergence-monotone-mf0}). Last, we provide empirical evidence that not only this method  learns the Master policy on a training set of distributions, but that the learned policy \emph{generalizes} to unseen distributions. Our approach is the first to tackle this question in the literature on MFGs. %

\section{Background and Related Works}

We consider a finite state space $\states$ and finite action space $\actions$. The set of probability distributions on $\states$ and $\actions$ are denoted by $\Delta_\states$ and $\Delta_\actions$. Let $p: \states \times \actions \times \Delta_\states \to \Delta_\states$ be a transition probability function and $r: \states  \times \actions \times \Delta_\states \to \RR$ be a reward function. Let $\gamma\in(0,1)$ be a discount parameter. In this section, we introduce the key concepts needed to explain our main contributions. Although there is no prior work tackling explicitly the question of generalization in MFG, we review along the way several related studies.

\subsection{Mean Field Games}

In the usual MFG setup~\citep{lasry2007mean,huang2006large}, a stationary policy is a function $\spol: \states \to \Delta_\actions$ and a non-stationary policy $\pol$ is an infinite sequence of stationary policies. Let $\sPol$ and $\Pol = \sPol^\NN$ be the sets of stationary and non-stationary policies respectively. Unless otherwise specified, by policy we mean a non-stationary policy. A mean-field (MF) state is a $\smf \in \Delta_\states$. It represents the state of the population at one time step. An MF flow $\mf$ is an infinite sequence of MF states. We denote by $\sMF = \Delta_\states$ and $\MF = \sMF^\NN$ the sets of MF states and MF flows. %
For $\smf \in \sMF$, $\spol \in \sPol$, let 
\begin{equation*}
    \phi(\smf,\spol): x \mapsto \sum_{x'\in\states} p(x|x',\spol(x'),\smf) \smf(x')
\end{equation*}
denote the next MF state. The MF flow starting from $\smf_0$ and controlled by $\pol \in \Pol$ is denoted by $\Phi(\smf_0,\pol) \in \MF$:
$$
    \Phi(\smf_0,\pol)_0 = \smf_0, 
    \,\, \Phi(\smf_0,\pol)_{n+1} = \phi(\Phi(\smf_0,\pol)_n, \pol_n), n \ge 0.
$$

Facing such a population behavior, an infinitesimal agent seeks to solve the following Markov Decision Process (MDP). Given an initial $\smf_0$ and a flow $\mf$, maximize:
$$
    \pol \mapsto J(\smf_0, \pol; \mf) = \EE \Big[\sum \limits_{n=0}^{+\infty} \gamma^n r(x_n, a_n, \mf_n) \Big],
$$
subject to: $x_0 \sim \smf_0$, $x_{n+1} \sim p(.|x_n, a_n, \mf_n)$, $a_n \sim \pol_n(.|x_n)$. 
Note that, at time $n$, the reward and transition depend on the current MF state $\mf_n$. So this MDP is non-stationary but since the MF flow $\mf$ is fixed and given, it is an MDP in the classical sense. In an MFG, we look for an equilibrium situation, in which the population follows a policy from which no individual player is interested in deviating. 

\begin{definition}[MFG Nash equilibrium]
\label{def:MFG-NE}
    Given $\smf_0 \in \sMF$, $(\hat \pol^{\smf_0},\hat \mf^{\smf_0}) \in \Pol \times \MF$ is an MFG Nash equilibrium (MFG-NE) consistent with $\smf_0$ if: (1) $\hat\pol^{\smf_0}$ maximizes $J(\smf_0, \cdot; \hat\mf^{\smf_0})$, and (2) $\hat\mf^{\smf_0} = \Phi(\smf_0,\hat\pol^{\smf_0})$.
\end{definition}
Being an MFG-NE amounts to say that the \emph{exploitability} $\cE(\smf_0, \hat\pol^{\smf_0})$ is $0$, where the exploitability of a policy $\pol \in \Pol$ given the initial MF state $\smf_0$ is defined as:
$$
    \cE(\smf_0, \pol) = \max \limits_{\pol'} J(\smf_0, \pol'; \Phi(\smf_0,\pol)) - J(\smf_0, \pol; \Phi(\smf_0,\pol)).
$$
 It quantifies how much a representative player can be better off by deciding to play another policy than $\pol$ when the rest of the population uses $\pol$ and the initial distribution is $\smf_0$ for both the player and the population. Similar notions are widely used in computational game theory~\citep{zinkevich2007regret,lanctot2009monte}.

In general, $\hat \pol^{\smf_0}$ is not an MFG-NE policy consistent with $\smf_0' \neq \smf_0$. Imagine for example a game in which the agents need to spread uniformly throughout a one-dimensional domain (see the experimental section). Intuitively, the movement of an agent at the center depends on where the bulk of the population is. If $\smf_0$ is concentrated on the left (resp. right) side, this agent should move towards the right (resp. left). Hence the optimal policy depends on the whole population distribution. 

Equilibria in MFG are traditionally characterized by a forward-backward system of equations~\citep{lasry2007mean,CarmonaDelarue_book_I}. Indeed, the value function of an individual player facing an MF flow $\mf$ is:
$$
    V_n(x; \mf) = \sup_{\pol \in \Pol} \EE_{x,\pol}\Big[\sum \limits_{n'=n}^{+\infty} \gamma^{n'-n} r(x_{n'}, a_{n'}, \mf_{n'}) \Big],
$$
where $x_n = x$ and $a_{n'} \sim \pol_{n'}(\cdot|x_{n'})$, $n' \ge n$. Dynamic programming yields:
$$
    V_n(x; \mf) = \sup_{\spol \in \sPol} \EE_{x,\spol}\Big[r(x_{n}, a_n, \mf_{n}) + \gamma V_{n+1}(x'; \mf)\Big],  
$$
where $x_n = x$, $a_n \sim \pi(\cdot|x)$ and $x' \sim p(\cdot|x,a,\mf_n)$. 
Taking the maximizer gives an optimal policy for a player facing $\mf$. To find an equilibrium policy, we replace $\mf$ by the equilibrium MF flow $\hat\mf$: $\hat V_n(\cdot) = V_n(\cdot; \hat\mf)$. But $\hat\mf$ is found by using the corresponding equilibrium policy. This induces a coupling between the backward equation for the representative player and the forward population dynamics.

The starting point of our Master policy approach is to notice that $V_n(\cdot; \mf)$ depends on $n$ and $\mf$ only through $(\mf_{n'})_{n' \ge n}$ hence $V_n$ depends on $n$ only through $(\mf_{n'})_{n' \ge n}$:
$$
    V_n(x; \mf)
    =
    V(x; (\mf_{n'})_{n' \ge n}) 
$$
where, for $\mf \in \MF, x \in \states$,
\begin{equation}
    \label{eq:def-V-mu}
    V(x; \mf)
    = \sup_{\spol \in \sPol} \EE_{x,\spol}\Big[r(x, a, \mf_{0}) + \gamma V(x'; (\mf_{n})_{n \ge 1})\Big],  
\end{equation}
where $a \sim \pi(\cdot|x,\mf_0)$ and $x' \sim p(\cdot|x,a,\mf_0)$. 

From here, we will express the equilibrium policy $\hat \pol_n$ as a stationary policy (independent of $n$) which takes $\hat \mf_n$ as an extra input. Replacing $n$ by $\hat \mf_n$ increases the input size but it opens new possibilities in terms of \emph{generalization in MFGs}.

\subsection{Learning in Mean Field Games}

We focus on methods involving Reinforcement Learning, or Dynamic Programming when the model is known. Learning in MFGs can also involve methods that approximate directly the forward-backward system of equations with function approximations (such as neural networks), but we will not address them here; see, \textit{e.g.}, \citep{al2018solving,carmona2021convergence}. 

In the literature, \textit{Learning} in MFGs indistinctly refers to the optimization algorithm (being most of the time the fixed point or variations of Fictitious Play), or to the subroutines involving learning that are used to compute the policy (Reinforcement Learning) or the distribution. We make here a distinction between these notions for the sake of clarity.

\paragraph{Optimization algorithm.} From a general point of view, learning algorithms for MFGs approximate two types of objects: (1) a policy for the representative agent, and (2) a distribution of the population, resulting from everyone applying the policy. This directly leads to a simple fixed-point iteration approach, in which we alternatively update the policy and the mean-field term. This approach has been used, e.g., by~\citet{guo2019learning}. However without strong hypothesis of regularity and a strict contraction property, this scheme does not converge to an MFG-NE. To stabilize the learning process and to ensure convergence in more general settings, recent papers have either added regularization~\citep{anahtarci2020q, guo2020entropy, cui2021approximately} or used Fictitious Play \citep{hadikhanloo_fictitious-play,cardaliaguet2018mean, mguni2018decentralisedli,perrin2020fictitious,delarue2021exploration}, while \citet{hadikhanloo2017learningnonatomic} and \citet{perolat2021scaling} have introduced and analyzed Online Mirror Descent.

\paragraph{Reinforcement learning subroutine.} For a given population distribution, to update the representative player's policy or value function, we can rely on RL techniques. For instance~\citet{guo2019learning,anahtarci2020q} rely on Q-learning to approximate the $Q$-function in a tabular setting, ~\citet{fu2019actorcritic} study an actor-critic method in a linear-quadratic setting, and~\citet{elie2020convergence,perrin2021flocking} solve continuous spaces problems by relying respectively on deep deterministic policy gradient
~\citep{Lillicrap2016ContinuousCW} or soft actor-critic \citep{SACDBLP:journals/corr/abs-1801-01290}. Two time-scales combined with policy gradient has been studied by~\citet{subramanian2019reinforcementstatioMFG} for stationary MFGs. Policy iterations together with sequential decomposition has been proposed by \citet{mishra2020model} while \citet{guo2020generalframework} proposes a method relying on Trust Region Policy Optimization (TRPO, \citet{schulman2015trpo}). 

\paragraph{Distribution embedding.} Another layer of complexity in MFGs is to take into consideration population distributions for large spaces or even continuous spaces. To compute MFG solutions through a PDE approach, \citet{al2018solving,carmona2021convergence} used deep neural networks to approximate the population density in high dimension. In the context of RL for MFGs, recently, \citet{perrin2021flocking} have used Normalizing Flows \citep{rezende2015variational} to approximate probability measures over continuous state space in complex environments.

\subsection{Generalization in MFGs through Master policies}

So far, learning approaches for MFGs have considered only two aspects: optimization algorithms (\textit{e.g.}, Fictitious Play or Online Mirror Descent), or model-free learning of a representative player's best response based on samples (\textit{e.g.}, Q-learning or actor-critic methods). Here, we build upon the aforementioned notions and add to this picture another dimension of learning: \emph{generalization} over population distributions. We develop an approach to learn the representative player's best response as a function of any current population distribution and not only the ones corresponding to a fixed MFG-NE. This is tightly connected with the so-called Master equation in MFGs~\citep{lionsCDF,BENSOUSSAN20151441,cardaliaguet2019master}. Introduced in the continuous setting (continuous time, continuous state and action spaces), this equation is a partial differential equation (PDE) which corresponds to the limit of systems of Hamilton-Jacobi-Bellman PDEs characterizing Nash equilibria in symmetric $N$-player games. In our discrete context, we introduce a notion of Master Bellman equation and associated Master policy, which we then aim to compute with a new learning algorithm based on Fictitious Play.

\section{Master Policies for MFGs}

We introduce the notion of Master policy and connect it to standard population-agnostic policies arising in MFG-NE.

Consider an MFG-NE $(\hat\pol^{\smf_0},\hat\mf^{\smf_0})$ consistent with some $\smf_0$. Let $\hat V(\cdot;\smf_0) = V(\cdot; \hat\mf^{\smf_0})$, \textit{i.e.},
$$
    \hat V(x; \smf_0)
    = \sup_{\spol \in \sPol} \EE_{\spol}\Big[r(x, a, \smf_0) + \gamma V(x'; (\hat\mf_{n}^{\smf_0})_{n \ge 1})\Big],  
$$
where $a \sim \pi(\cdot|x,\smf_0)$ and $x' \sim p(\cdot|x,a,\smf_0)$. By definition, $\hat\pol_0^{\smf_0}$ is a maximizer in the sup above. Moreover, in the right-hand side,
$$
    V(x'; (\hat\mf_{n}^{\smf_0})_{n \ge 1}) = \hat V(x'; \hat\mf_{1}^{\smf_0}), \qquad \hat\mf_{1}^{\smf_0} = \phi(\smf_0, \hat\pol_0^{\smf_0}).
$$
By induction, the equilibrium can be characterized as:
$$
    \begin{cases}
    \hat\pol_n^{\smf_0}
    \in \argmax_{\spol \in \sPol} \EE_{\spol}\Big[r(x, a, \hat\mf_n^{\smf_0}) + \gamma \hat V(x'; \hat\mf_{n+1}^{\smf_0})\Big]
    \\
    \hat V(x; \hat\mf_{n}^{\smf_0})
    = \EE_{\hat\pol_n^{\smf_0}}\Big[r(x, a, \hat\mf_n^{\smf_0}) + \gamma \hat V(x'; \hat\mf_{n+1}^{\smf_0})\Big]
    \\
    \hat\mf_{n+1}^{\smf_0} = \phi(\hat\mf_{n}^{\smf_0}, \hat\pol_{n}^{\smf_0}).
\end{cases}
$$
Note that $\hat\mf_{n+1}^{\smf_0}$ and $\hat\pol_{n}^{\smf_0}$ depend on each other (and also on $\smf_0$), which creates a forward-backward structure.

In the sequel, we will refer to this function $V$ as the \emph{Master value function}. Computing the value function $(x,\smf) \mapsto \hat V(x; \smf)$ would allow us to know the value of any individual state $x$ facing an MFG-NE starting from any MF state $\smf$. However, it would not allow to easily find the corresponding equilibrium policy, which still depends implicitly on the equilibrium MF flow. For this reason, we introduce the notion of population-dependent policy. The set of population-dependent policies $\poppol: \states \times \Delta_\states \to \Delta_\actions$ is denoted by $\Poppol$.

\begin{definition}
    A population-dependent $\masterpol \in \Poppol$ is a Master policy if for every $\smf_0$, $(\pol^{\smf_0,\masterpol},\mf^{\smf_0,\masterpol})$ is an MFG-NE, where: $\mf^{\smf_0,\masterpol}_{0} = \smf_0$ and for $n \ge 0$,
    \begin{equation}
    \label{eq:masterpol-Nasheq}
    \begin{cases}
        \pol^{\smf_0,\masterpol}_n(x) = \masterpol(x, \mf^{\smf_0,\masterpol}_n)
        \\
        \mf^{\smf_0,\masterpol}_{n+1} = \phi(\mf^{\smf_0,\masterpol}_{n}, \pol^{\smf_0,\masterpol}_n).
    \end{cases}
    \end{equation}
\end{definition}
\looseness=-1
A Master policy allows recovering the MFG-NE starting from any initial MF state. A core question is the existence of such a policy, which we prove in Theorem~\ref{thm:usual-to-master} below.  Hence, if there is a unique Nash equilibrium MF flow (\textit{e.g.}, thanks to monotonicity), the MF flow $\mf^{\smf_0,\masterpol}$ obtained with the Master policy $\masterpol(a|x,\mf^{\smf_0,\masterpol}_n)$ is the same as the one obtained with a best response policy $\hat\pol^{\smf_0}_n(a|x)$ starting from $\smf_0$.

\begin{theorem}
\label{thm:usual-to-master}
    Assume that, for all $\smf_0 \in \sMF$, the MFG admits an equilibrium consistent with $\smf_0$ and that the equilibrium MF flow is unique. Then there exists a Master policy $\masterpol$. 
\end{theorem}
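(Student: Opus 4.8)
The plan is to construct the Master policy explicitly, exploiting the uniqueness of the equilibrium flow together with a time-consistency property of equilibria. For each MF state $\smf \in \sMF$, the hypothesis supplies an MFG-NE $(\hat\pol^{\smf}, \hat\mf^{\smf})$ consistent with $\smf$; I fix one such equilibrium for every $\smf$ (the flow is unique, but the policy need not be, so one simply makes a choice) and define the candidate Master policy by reading off the first action of the equilibrium policy,
\begin{equation*}
    \masterpol(x, \smf) = \hat\pol_0^{\smf}(x).
\end{equation*}
It then remains to verify that, launched from an arbitrary $\smf_0$ through the recursion of Eq.~\eqref{eq:masterpol-Nasheq}, this single population-dependent policy reproduces the equilibrium $(\hat\pol^{\smf_0}, \hat\mf^{\smf_0})$.

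The key step is a time-consistency lemma: for every $\smf_0$ and all $n, k \ge 0$,
\begin{equation*}
    \hat\mf_{n+k}^{\smf_0} = \hat\mf_k^{\hat\mf_n^{\smf_0}}.
\end{equation*}
To prove it, I would show that the tail $\big((\hat\mf_{n+k}^{\smf_0})_k, (\hat\pol_{n+k}^{\smf_0})_k\big)$ of an equilibrium is itself an equilibrium consistent with its own starting point $\hat\mf_n^{\smf_0}$. Condition (2) of Definition~\ref{def:MFG-NE} is immediate from the forward recursion $\hat\mf_{n+k+1}^{\smf_0} = \phi(\hat\mf_{n+k}^{\smf_0}, \hat\pol_{n+k}^{\smf_0})$. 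For the optimality condition (1), I invoke the backward Bellman system derived just above the statement: the equilibrium policy is greedy at every state for the optimal value function of the MDP with the fixed flow $\hat\mf^{\smf_0}$, so its tail is greedy and achieves the optimal value of the tail MDP, hence is optimal against the tail flow (here $\gamma \in (0,1)$ guarantees that greedy policies are optimal in the fixed-flow discounted MDP). Thus the tail is an equilibrium, and uniqueness of the equilibrium flow forces it to coincide with $\hat\mf^{\hat\mf_n^{\smf_0}}$.

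Granting the lemma, I would conclude by induction that the flow produced by $\masterpol$ from $\smf_0$ equals $\hat\mf^{\smf_0}$. Writing $\mf^{\smf_0, \masterpol}$ for the Master-policy flow, the base case is $\mf_0^{\smf_0,\masterpol} = \smf_0 = \hat\mf_0^{\smf_0}$; assuming $\mf_n^{\smf_0,\masterpol} = \hat\mf_n^{\smf_0} =: \smf$, the Master policy plays $\masterpol(\cdot, \smf) = \hat\pol_0^{\smf}$, whence
\begin{equation*}
    \mf_{n+1}^{\smf_0,\masterpol} = \phi(\smf, \hat\pol_0^{\smf}) = \hat\mf_1^{\smf} = \hat\mf_{n+1}^{\smf_0},
\end{equation*}
the last equality being the lemma with $k=1$. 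So the Master-policy flow is $\hat\mf^{\smf_0}$, and the induced stage policy at time $n$ is $\hat\pol_0^{\hat\mf_n^{\smf_0}}$. Using the equilibrium Bellman equations at $\hat\mf_n^{\smf_0}$ together with the lemma (which identifies $\hat\mf_1^{\hat\mf_n^{\smf_0}}$ with $\hat\mf_{n+1}^{\smf_0}$), this stage policy satisfies exactly the Bellman optimality equation of the original problem at step $n$; being greedy for the optimal value function at every step, it is optimal against $\hat\mf^{\smf_0}$. Both conditions of Definition~\ref{def:MFG-NE} then hold, so $(\pol^{\smf_0,\masterpol}, \mf^{\smf_0,\masterpol})$ is an MFG-NE consistent with $\smf_0$; as $\smf_0$ is arbitrary, $\masterpol$ is a Master policy.

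The main obstacle I anticipate is the time-consistency lemma, and in particular the interplay of the two possible sources of non-uniqueness: the equilibrium policy may fail to be unique even when the flow is. The argument sidesteps this by never comparing policies directly---it compares only flows (which are unique) and checks Bellman optimality, so the Master-policy-induced policy $\hat\pol_0^{\hat\mf_n^{\smf_0}}$ need not equal the original $\hat\pol_n^{\smf_0}$; both are merely greedy selections for the same optimal value function and the same fixed flow, which is all that optimality requires. Some care is needed so that the value-function bookkeeping ($\hat V(\cdot;\smf) = V(\cdot; \hat\mf^{\smf})$ and its time shifts) matches up under the lemma, but this is routine once consistency of the flows is established.
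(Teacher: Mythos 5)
Your proposal is correct and follows essentially the same route as the paper: the same candidate $\masterpol(x,\smf)=\hat\pol_0^{\smf}(x)$, the same induction on the flow, and the same key step — your ``time-consistency lemma'' (tail of an equilibrium is an equilibrium for its own starting point, plus uniqueness of the flow) is exactly the paper's justification for its equality $(\star)$, and the optimality check via greediness for the Master value function matches the paper's dynamic-programming argument.
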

Existence and uniqueness of the MFG-NE for a given $\smf_0$ can be proved under a mild monotonicity condition, see \textit{e.g.}~\citet{perrin2020fictitious}. Thm.~\ref{thm:usual-to-master} is proved in detail in the Appendix. We check, step by step, that the MF flow generated by $\masterpol$ and the associated population-agnostic policy as defined in~\eqref{eq:masterpol-Nasheq} form a MFG-NE. The key idea is to use dynamic programming relying on the Master value function $V$ and the uniqueness of the associated equilibrium MF flow.

\section{Algorithm}

We have demonstrated above that the Master policy is well-defined and allows to recover Nash equilibria. We now propose a method to compute such a policy.

\subsection{Fictitious Play}

We introduce an adaptation of the Fictitious Play algorithm to learn a Master policy. This extends to the case of population-dependent policies the algorithm introduced by~\citet{hadikhanloo_fictitious-play}. In the same fashion, at every iteration $k$, it alternates three steps: (1) computing a best response policy $\widetilde\pi_k$ against the current averaged MF flows $\bar\trainset_k$, (2) computing $(\mf
^{\mu_0, \widetilde \pi_k})_{\mu_0 \in \trainset}$, the MF flows induced by $\widetilde \pi_k$, and (3) updating $\bar\trainset_{k+1}$ with $(\mf
^{\mu_0, \widetilde \pi_k})_{\mu_0 \in \trainset}$. In contrast with~\citet{hadikhanloo_fictitious-play}, we learn policies that are randomized and not deterministic. 

We choose Fictitious Play rather than a simpler fixed-point approach because it is generally easier to check that an MFG model satisfies the assumptions used to prove convergence (monotonicity condition rather than contraction properties, as \textit{e.g.} in  \citet{huang2006large,guo2019learning}). %

Ideally, we would like to train the population-dependent policy on every possible distributions, but this is not feasible. %
Thus, we take a finite training set $\trainset$ of initial distributions. Each training distribution is used at each iteration of Fictitious Play. Another possibility would have been to swap these two loops, but we chose not to do this because of \emph{catastrophic forgetting} \citep{french1999catastrophicforgetting, goodfellow2015empirical}, a well-know phenomenon in cognitive science that also occurs in neural networks, describing the tendency to forget previous information when learning new information. Our proposed algorithm  is summarized in Alg.~\ref{algMEFP} and we refer to it as \emph{Master Fictitious Play}.

\begin{algorithm2e}[ht!]
\SetAlgoLined
\DontPrintSemicolon
\SetKwInOut{Input}{input}\SetKwInOut{Output}{output}
\Input{Initial $\widetilde\pi_0 \in \Poppol$, training set of initial distributions $\trainset$, number of Fictitious Play steps $K$}
\caption{Master Fictitious Play \label{algMEFP}}
Let $\bar{\pi} = \widetilde\pi_0$; let $\bar\mf^{\smf_0}_{0,n} = \smf_0$ for all $\smf_0 \in \sMF$, all $n \ge 0$\; 
Let $\bar \trainset_0 = (\bar\mf^{\smf_0}_0)_{\smf_0 \in \trainset}\; $ %

\For{$k=1, \dots, K$}{
        Train $\widetilde\pi_k$ against $\bar \trainset_k = (\bar\mf^{\smf_0}_k)_{\smf_0 \in \trainset} $, to maximize Eq.~ \eqref{eq:subroutine-optim-pb} \;
        \For{$\smf_0 \in \trainset$}{
        Compute $\mf^{\smf_0}_k$, the MF flow starting from $\smf_0$ induced by $\widetilde\pi_k$ against $\bar\mf^{\smf_0}_k$\;
        Let $\bar\mf^{\smf_0}_k = \frac{k}{k+1}\bar\mf^{\smf_0}_{k-1} + \frac{1}{k+1} \mf^{\smf_0}_k$
        }
        Update $\avgpoppol_k = \text{UNIFORM}(\widetilde\pi_0, \dots, \widetilde\pi_k)$
}
\Return{\normalfont$\avgpoppol_K = \text{UNIFORM}(\widetilde\pi_0, \dots, \widetilde\pi_K)$}
\end{algorithm2e}
Alg.~\ref{algMEFP} returns $\avgpoppol_K$, which is the uniform distribution over past policies. We use it as follows. First, let:
$\mf_{k,0}^{\smf_0} = \smf_0,$ $k=1,\dots,K,$  
    $\bar\mf_{K,0}^{\smf_0} = \frac{1}{K} \sum_{k=1}^K \mf_{k,0}^{\smf_0}$, 
and then, for $n \ge 0$,
$$
\begin{cases}
    \mf^{\smf_0}_{k,n+1} = \phi(\mf^{\smf_0}_{k,n}, \poppol_k(\cdot|\cdot,\bar\mf^{\smf_0}_{K,n})), \qquad k=1,\dots,K
    \\
    \bar\mf^{\smf_0}_{K,n+1} = \frac{1}{K} \sum_{k=1}^K \mf^{\smf_0}_{k,n+1}.
\end{cases}
$$
Note that $\avgpoppol_K$ is used in the same way for every $\smf_0$. We will show numerically that this average distribution and the associated average reward are close to the equilibrium ones.

Define the average exploitability as:
\begin{equation}
    \label{eq:avg-exploitability}
    \bar\cE_{\trainset}(\avgpoppol_K) = \EE_{\smf_0 \sim \text{UNIFORM}(\trainset)} \big[ \bar\cE(\smf_0, \avgpoppol_K) \big], 
\end{equation}
where
$$
    \bar\cE(\smf_0, \avgpoppol_K) = \max \limits_{\pol'} J(\smf_0, \pol'; \bar\mf^{\smf_0}_K) - \frac{1}{K}\sum_{k=1}^K J(\smf_0, \poppol_k; \bar\mf^{\smf_0}_K).
$$
We expect $\bar\cE(\smf_0, \avgpoppol_K) \to 0$ as $K \to +\infty$. We show that this indeed holds under suitable conditions in the idealized setting with continuous time updates, where  $\avgpoppol_k$, $k = 0,1,2,\dots$, is replaced by $\avgpoppol_t$, $ t \in [0,+\infty)$ (see Appendix for details).
\begin{theorem}\label{thm:convergence-monotone-mf0}
Assume the reward is separable and monotone, \textit{i.e.}, $r(x,a,\smf) = r_A(x,a) + r_M(x,\smf)$  and $\sum_{x \in \states}(r_M(x,\smf) - r_M(x,\smf'))(\smf-\smf')(x) < 0$ for every $\smf \neq \smf'$. Assume the transition depends only on $x$ and $a$: $p(\cdot|x,a,\smf) = p(\cdot|x,a)$. Then $\bar\cE_{\trainset}(\avgpoppol_t)  = O(1/t)$, where $\avgpoppol_t$ is the average policy at time $t$ in the continuous time version of Master Fictitious Play. 
\end{theorem}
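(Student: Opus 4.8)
The plan is to reduce the statement to the convergence of continuous-time Fictitious Play for a single fixed initial distribution, and then average. By definition~\eqref{eq:avg-exploitability}, $\bar\cE_\trainset(\avgpoppol_t)$ is the average over $\smf_0\in\trainset$ of the single-population quantities $\bar\cE(\smf_0,\avgpoppol_t)$, so it suffices to bound each of these by $O(1/t)$ uniformly in $\smf_0$. The point that makes this decoupling legitimate is the \emph{Master} structure: the best-response step trains a single $\poppol_t$ against the whole family of belief flows $(\bar\mf^{\smf_0}_t)_{\smf_0\in\trainset}$ simultaneously, and because $\poppol_t$ reads the current distribution as an input, it can realize, for each $\smf_0$, a genuine best response against $\bar\mf^{\smf_0}_t$. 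Thus each $\smf_0$-subproblem behaves as a standalone Fictitious Play instance, and I fix one such $\smf_0$ from now on.

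Next I would use the two structural assumptions to put the subproblem in standard monotone form. Since $p(\cdot|x,a,\smf)=p(\cdot|x,a)$, the flow $\Phi(\smf_0,\pol)$ induced by a policy does not depend on the belief the agent responds to, so best-responding against a belief flow $\mf$ is an ordinary non-stationary MDP with running reward $r_A(x,a)+r_M(x,\mf_n)$. Writing the payoff through the induced flow gives $J(\smf_0,\pol;\mf)=F(\pol)+\sum_{n\ge 0}\gamma^n\sum_{x}\Phi(\smf_0,\pol)_n(x)\, r_M(x,\mf_n)$, where $F(\pol)$ collects the $\mf$-independent $r_A$ terms. The key consequence is that $J$ is \emph{affine} in the coupling $r_M(\cdot,\mf)$, which is exactly the structure the monotone Fictitious Play argument exploits.

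I would then set up the idealized dynamics $\bar\mf_t=\tfrac1t\int_0^t\mf_s\,ds$ with $\mf_s=\Phi(\smf_0,\brpol_s)$ and $\brpol_s$ a best response to $\bar\mf_s$, which yields $\dot{\bar\mf}_t=\tfrac1t(\mf_t-\bar\mf_t)$ and matches the discrete averaging in Alg.~\ref{algMEFP}. Introduce the best-response value $\Gamma(\mf)=\max_{\pol}J(\smf_0,\pol;\mf)$ and write the continuous exploitability as $e(t)=\Gamma(\bar\mf_t)-\tfrac1t\int_0^t J(\smf_0,\brpol_s;\bar\mf_t)\,ds$. Differentiating $\Gamma(\bar\mf_t)$ via the envelope (Danskin) theorem — valid because $\brpol_t$ attains the maximum and $J$ is affine in the coupling — produces a pairing of the best-response-induced flow $\mf_t$ against $\dot{\bar\mf}_t=\tfrac1t(\mf_t-\bar\mf_t)$ weighted by $r_M$. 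The monotonicity hypothesis supplies $\sum_{x}(r_M(x,\mf_{t,n})-r_M(x,\bar\mf_{t,n}))(\mf_{t,n}-\bar\mf_{t,n})(x)\le 0$, so the cross-term has the favorable sign; rearranging gives a differential inequality of the schematic form $\tfrac{d}{dt}\big(t\,e(t)\big)\le 0$, whose integration yields $e(t)=O(1/t)$. Averaging over $\smf_0\in\trainset$ delivers the claim.

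The main obstacle, where I expect most of the work, is the differentiation step together with the handling of the coupling when $r_M$ is not assumed linear in $\smf$: the envelope computation and the identification of the average-policy value $\tfrac1K\sum_k J(\smf_0,\poppol_k;\bar\mf^{\smf_0}_K)$ with the time-average $\tfrac1t\int_0^t J(\smf_0,\brpol_s;\bar\mf_t)\,ds$ both hinge on cleanly separating the belief-affine coupling from the policy-dependent, belief-independent part and on the identity $\bar\mf_t=\tfrac1t\int_0^t\mf_s\,ds$. A secondary difficulty is rigorously justifying the reduction of the first paragraph — that one Master policy can simultaneously represent the best responses against all belief flows $(\bar\mf^{\smf_0}_t)_{\smf_0\in\trainset}$ — which is precisely where the Master-policy construction of Thm.~\ref{thm:usual-to-master} and the uniqueness of the equilibrium flow enter; once this is granted, the per-population estimate is the classical monotone continuous-time Fictitious Play bound.
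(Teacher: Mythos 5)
Your proposal follows essentially the same route as the paper's own proof: fix each $\smf_0\in\trainset$ separately, set up the continuous-time averaged flow with $\frac{d}{dt}\bar\mf_{t,n}=\frac{1}{t}(\mf_{t,n}^{\BR}-\bar\mf_{t,n})$, differentiate the exploitability using the separable/affine structure of the reward and an envelope argument for the max term, invoke monotonicity to sign the remaining quadratic term, integrate the resulting inequality $\frac{d}{dt}(t\,\bar\cE)\le 0$, and average over the training set. The only cosmetic difference is that the paper expresses the monotone term infinitesimally via $\nabla_\mu r_M$ paired with $\frac{d}{dt}\bar\mf_{t,n}$ (obtained by dividing the monotonicity inequality by $\tau^2$ and letting $\tau\to 0$) rather than in your finite-difference form, and, like you, it treats the per-$\smf_0$ best responses as exact in the idealized analysis rather than justifying that a single population-dependent policy realizes all of them simultaneously.
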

The details of continuous time Master Fictitious Play and the proof of this result are provided in the Appendix, following the lines of~\citep{perrin2020fictitious} adapted to our setting. Studying continuous time updates instead of discrete ones enables us to use calculus, which leads to a simple proof. To the best of our knowledge, there is no rate of convergence for discrete time Fictitious Play in the context of MFG except for potential or linear-quadratic structures, see~\citep{geist2021curl} and~\citep{delarue2021exploration}.

\subsection{Deep RL to Learn a Population-dependent Policy} 

In Alg.~\ref{algMEFP}, a crucial step is to learn a population-dependent best response against the current averaged MF flows $\bar \trainset_k = (\bar\mf^{\smf_0}_k)_{\smf_0 \in \trainset}$, \textit{i.e.}, $\poppol^*_k$ maximizing
\begin{equation}
    \label{eq:subroutine-optim-pb}
    \textstyle \poppol \mapsto \frac{1}{|\trainset|}\sum_{\smf_0 \in \trainset} J(\smf_0, \poppol; \bar\mf^{\smf_0}_k).
\end{equation} 

Solving the optimization problem~\eqref{eq:subroutine-optim-pb} can be reduced to solving a standard but non-stationary MDP. Since we aim at optimizing over population-dependent policies, the corresponding $Q$-function is a function of not only an agent's state-action pair $(x,a)$ but also of the population distribution: $\widetilde Q(x,\smf, a)$. Adding the current mean field state $\smf$ to the $Q$-function allows us to recover a stationary MDP. As we know that the optimal policy is stationary, we now have a classical RL problem with state $(x, \smf)$ (instead of $x$ only), and we can use Deep RL methods such as DQN \citep{mnih2013playing} to compute $\widetilde Q_k$. The policy $\widetilde \pi_k$ can then be recovered easily by applying the $\argmax$ operator to the $Q$-function.

Various algorithms could be used, but we choose DQN 
to solve our problem because it is sample-efficient.
An algorithm detailing our adaptation of DQN to our setting is provided in Alg.~\ref{algMBR} in the Appendix. 
For the numerical results presented below, we used the default implementation of RLlib \citep{rllib}. 

The neural network representing the $Q$-function takes as inputs the state $x$ of the representative player and the current distribution $\smf$ of the population, which can simply be represented as a histogram (the proportion of agents in each state). In practice, $\smf$ is a mean-field state coming from one of the averaged MF flows $\bar\mf^{\mu_0}_k$ and is computed in steps 7 and 8 of Alg.~\ref{algMEFP} with a Monte-Carlo method, \textit{i.e.} by sampling a large number of agents that follow the last population-dependent best response $\widetilde \pi_k$ and averaging it with $\bar\mf^{\mu_0}_{k-1}$.
Then, the $Q$-function can be approximated by a feedforward fully connected neural network with these inputs. In the examples considered below, the finite state space comes from the discretization of a continuous state space in dimension~$1$ or~$2$. The aforementioned simple approximation gives good results in dimension $1$. However, in dimension $2$, the neural network did not manage to learn a good population-dependent policy in this way. This is probably because passing a histogram as a flat vector ignores the geometric structure of the problem. We thus resort to a more sophisticated representation. We first create an \emph{embedding} of the distribution by passing the histogram to a convolutional neural network (ConvNet). The output of this embedding network is then passed to a fully connected network which outputs probabilities for each action  (see~ Fig.\ref{fig:NN}). The use of a ConvNet is motivated by the fact that the state space in our examples has a clear geometric interpretation and that the population can be represented as an image. 

\begin{figure}[tbh]
    \centering
    
    \includegraphics[width=0.7\linewidth]{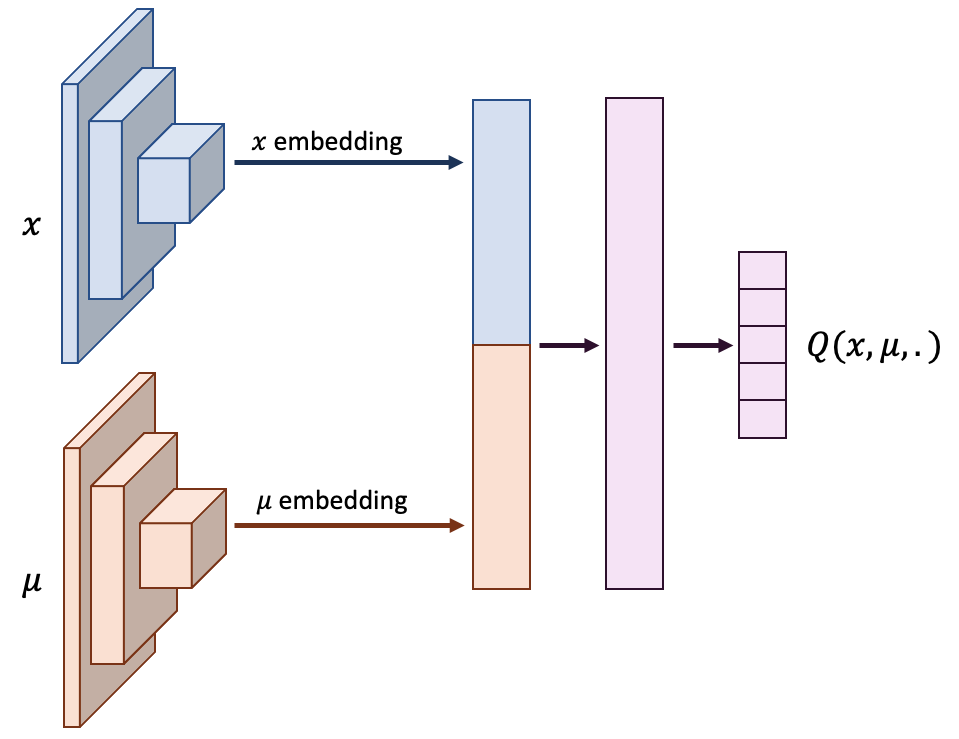}
    
    \caption{Neural network architecture of the $Q$-function for the 2D beach bar experience.
    }
    \label{fig:NN}
\end{figure}

\subsection{On the Theoretical vs. Experimental Settings}

Theoretically, we expect the algorithm Alg.~\ref{algMEFP} to converge perfectly to a Master policy. This intuition is supported by Thm.~\ref{thm:convergence-monotone-mf0} and comes from the fact that Fictitious Play has been proved to converge to  population-agnostic equilibrium policies when the initial distribution is fixed~\citep{hadikhanloo_fictitious-play,perrin2020fictitious}. However, from a practical viewpoint, here we need to make several approximations. The main one is related to the challenges of conditioning on a MF state. Even though the state space $\states$ is finite, the space of MF states $\sMF = \Delta_\states$ is infinite and of dimension equal to the number of states, which is potentially very large. This is why we need to rely on function approximation (\textit{e.g.}, by neural networks as in our implementation) to learn an optimal population-dependent policy. Furthermore, the training procedure uses only a finite (and relatively small) set of training distributions. On top of this, other more standard approximations are to be taken into account, in particular due to the use of a Deep RL subroutine.

\section{Numerical Experiments}

\subsection{Experimental Setup}

We now illustrate the efficiency and generalization capabilities of the Master policy learned with our proposed method.

\paragraph{Procedure.} To demonstrate experimentally the performance of the learned Master policy trained by Alg.~\ref{algMEFP}, we consider: several initial distributions, several benchmark policies and several metrics. For each metric, we illustrate the performance of each policy on each initial distribution. The initial distributions come from two sets: the training set $\trainset$ used in Alg.~\ref{algMEFP} and a testing set. For the benchmark policies, in the absence of a population-dependent baseline of reference (since, to the best of our knowledge, our work is the first to deal with Master policies), we focus on natural candidates that are population-agnostic. The metrics are chosen to give different perspectives: the population distribution and the policy performance in terms of reward.

\paragraph{Training set of initial distributions.}
In our experiments, we consider a training set $\trainset$ composed of Gaussian distributions such that the union of all these distributions sufficiently covers the whole state space. This ensures that the policy learns to behave on any state $x \in \states$. Furthermore, although we call ``training set'' the set of initial distributions, the policy actually sees more distributions during the training episodes. Each distribution visited could be considered as an initial distribution. Note however that it is very different from training the policy on all possible \emph{population distributions} (which is a simplex with dimension equal to the number of states, \textit{i.e.}, $32$ or $16^2=256$ in our examples). 

\paragraph{Testing set of initial distributions.}
The testing set is composed of two types of distributions. First, random distributions generated by sampling uniformly a random number in $[0,1]$ for each state independently, and then normalizing the distribution. Second, Gaussian distributions with means located between the means of the training set, and various variances (see the Appendix for a detailed representation of the training and testing sets).

\paragraph{Benchmark type 1: Specialized policies.}
For a given initial distribution $\smf_0^i$ with $i \in \{1,\dots,|\trainset|\}$, we  consider a Nash equilibrium starting from this MF state, \textit{i.e.}, a population-agnostic policy $\hat\pol^i$ and a MF flow $\hat\mf^i$ satisfying Def.~\ref{def:MFG-NE} with $\smf_0$ replaced by $\smf_0^i$. In the absence of analytical formula, we compute such an equilibrium using Fictitious Play algorithm with backward induction \citep{perrin2020fictitious}. %
We then compare our learned Master policy with each $\hat\pol^i$, either on $\smf_0^i$ or on another $\smf_0^j$. In the first case, it allows us to check the correctness of the learned Master policy, and in the second case, to show that it generalizes better than $\hat\pol^i$.

\paragraph{Benchmark type 2: Mixture-reward policy.} 
Each (population-agnostic) policy discussed above is specialized for a given $\smf_0^i$ but our Alg.~\ref{algMEFP} trains a (population-dependent) policy on various initial distributions. It is thus natural to see how the learned Master policy fares in comparison with a population-agnostic policy trained on various initial distributions. %
We thus consider another benchmark, called \emph{mixture-reward policy}, which is a population-agnostic policy trained to optimize an average reward. It is computed as the specialized policies described above but we replace the reward definition with an average over the training distributions.  
For $1 \le i \le |\trainset|$, recall $\hat\mf^{i}$ is a Nash equilibrium MF flow starting with MF state $\smf_0^i$. We consider the average reward: 
$
    \bar r_n(x,a) = \frac{1}{|\trainset|} \sum_{i=1}^{|\trainset|} r(x,a, \hat\mf^{i}_n).
$ 
The mixture-reward policy is an optimal policy for the MDP with this reward function. In our experiments, %
we compute it as for the specialized policies described above. %

\paragraph{Benchmark type 3: Unconditioned policy.} Another meaningful comparison is to use the same algorithm while removing the population input. This amounts to running Alg.~\ref{algMEFP} where, in the DQN subroutine, the $Q$-function neural network is a function of $x$ and $a$ only. So in Fig.~\ref{fig:NN}, we replace the $\smf$ input embedding by zeros. We call the resulting policy \emph{unconditioned policy} because it illustrates the performance when removing the conditioning on the MF term. This benchmark will be used to illustrate that the success of our approach is not only due to combining Deep RL with training on various $\smf_0$: conditioning the $Q$-function and the policy on the MF term plays a key role.

\paragraph{Metric 1: Wasserstein distance between MF flows.} We first measure how similar the policies are in terms of induced behavior at the scale of the population. 
Based on the Wasserstein distance $W$ between two distributions (see Appendix for details), we compute the following distance between MF flows truncated at some horizon $N_T$: %
        $$
        W_{i,j} 
        := \textstyle \frac{1}{N_T+1}\sum_{n=0}^{N_T} W(\mf^{\pol^{i},\smf_0^j}_n, \mf^{\pol^{j}, \smf_0^j}_n).
        $$
Note that $W_{i,i}=0$. The term $\mf^{\pol^{j}, \smf_0^j} = \hat\mf^j$ is the equilibrium MF flow starting from $\smf_0^j$, while $\mf^{\pol^{i},\smf_0^j}$ is the MF flow generated by starting from $\smf_0^j$ and using policy $\pol^i$.

\paragraph{Metric 2: Exploitability.} We also assess the performance of a given policy by measuring how far from being a Nash it is. To this end, we use the exploitability. 
We compute for each $i,j$: 
        $
        E_{i,j} = \cE(\smf_0^j, \hat\pol^i).
        $ 
When $i=j$, $E_{i,i} = 0$ because $(\hat\pol^i,\hat\mf^i)$ is a Nash equilibrium starting from $\smf_0^i$. 
When $i \ne j$, $E_{i,j}$ measures how far from being optimal $\hat\pol^i$ is when the population also uses $\hat\pol^i$, but both the representative player and the population start with $\smf_0^j$. If $E_{i,j}=0$, then $\hat\pol^i$ is a Nash equilibrium policy even when starting from $\smf_0^j$.

\subsection{Experiment 1: Pure exploration in 1D}
\label{sec:numerics-exploration1D}
We consider a discrete 1D environment inspired by \citet{geist2021curl}.  Transitions are deterministic, the state space is $\states=\{1, \dots, |\states| = 32\}$. The action space is $\actions = \{-1, 0, 1\}$: agents can go left, stay still or go right (as long as they stay in the state space). 
The reward penalizes the agent with the amount of people at their location, while discouraging them from moving too much: 
$
    r(x, a, \mu) = -\log(\mu(x)) - \tfrac {1}{|X|}|a|.
$ 
The training set of initial distributions $\trainset$ consists of four Gaussian distributions with the same variance but different means.  
The testing set is composed of random and Gaussian distributions with various variances (see Appendix, the distributions are represented in the same order as they are used in Fig.~\ref{fig:1D_matrices_Gaussian_training}). 
We can see that the Master policy is still performing well on these distributions, which highlights its generalization capacities. 
Note that the white diagonal is due to the fact that the Wasserstein distance and exploitability is zero for specialized baselines evaluated on their corresponding $\mu_0$. We also observe that the random policy is performing well on random distributions, and that exact solutions that have learned on a randomly generated distribution seem to perform quite well on other randomly generated distributions. We believe this is due to this specific environment, because a policy that keeps enough entropy will have a good performance.
\begin{figure}[tbh]
    \centering
    \begin{minipage}{.23\textwidth}
    \includegraphics[width=\linewidth]{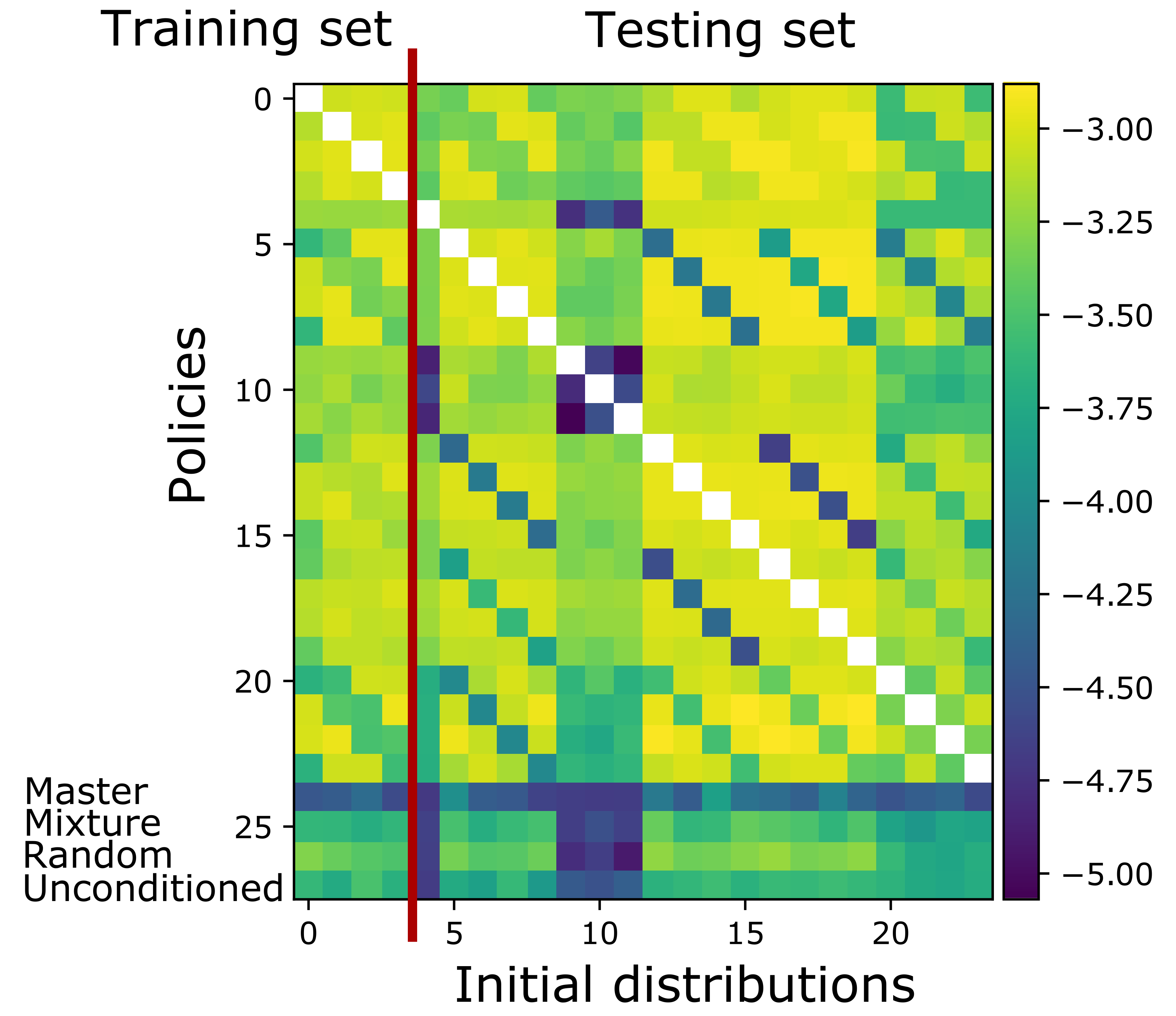}
    \end{minipage}
    \begin{minipage}{.23\textwidth}
    \includegraphics[width=\linewidth]{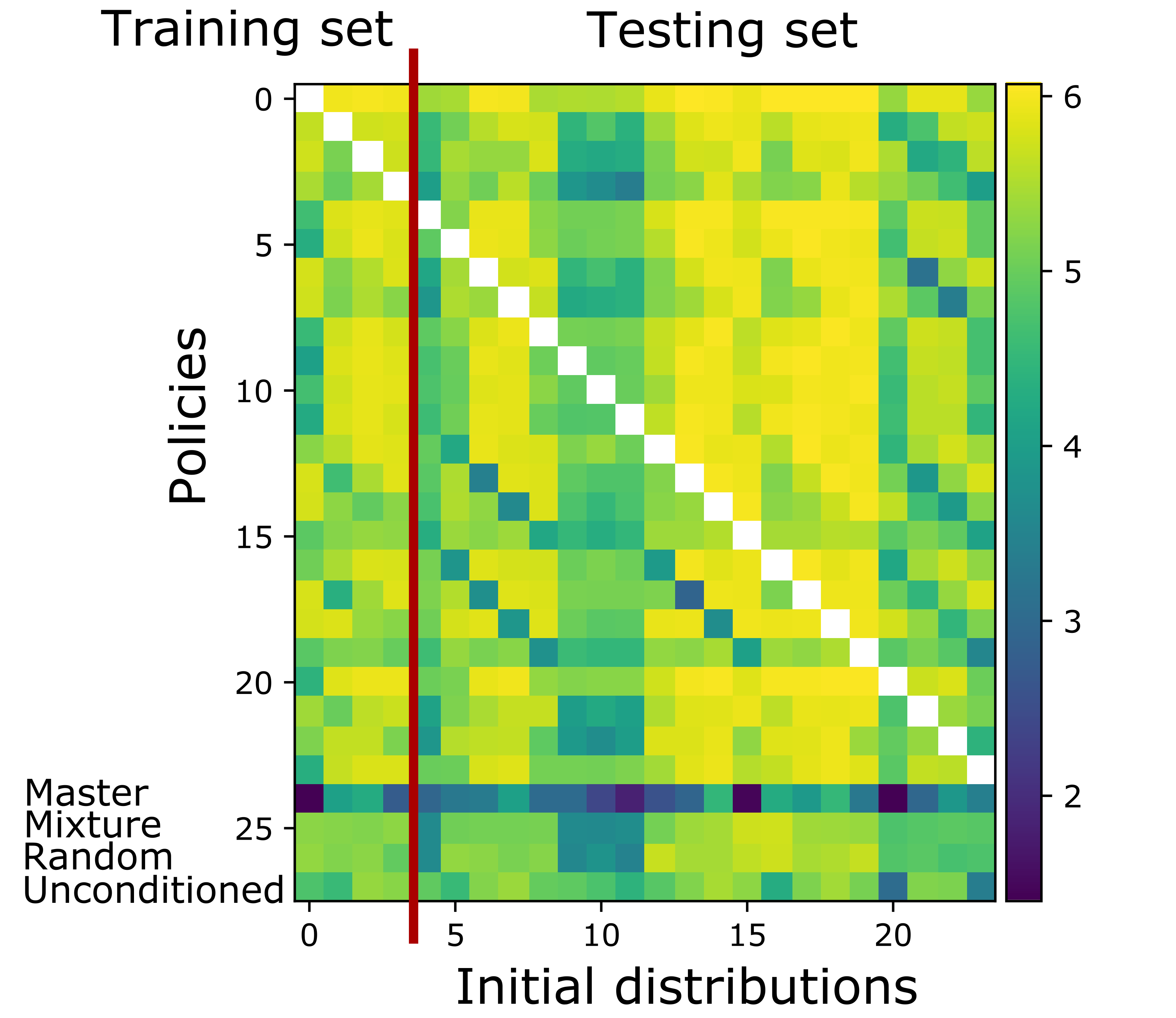}
    \end{minipage}
    
    \caption{\textbf{Exploration 1D: Performance matrices when the training set is made of Gaussian distributions.} 
    From left to right: \textbf{(a)} Log of Wasserstein distances to the exact solution (average over time steps); \textbf{(b)} Log of exploitabilities.
    }
    \label{fig:1D_matrices_Gaussian_training}
\end{figure}

\begin{figure*}[htbp]
    \centering
    \begin{minipage}{.23\textwidth}
    \includegraphics[width=\linewidth]{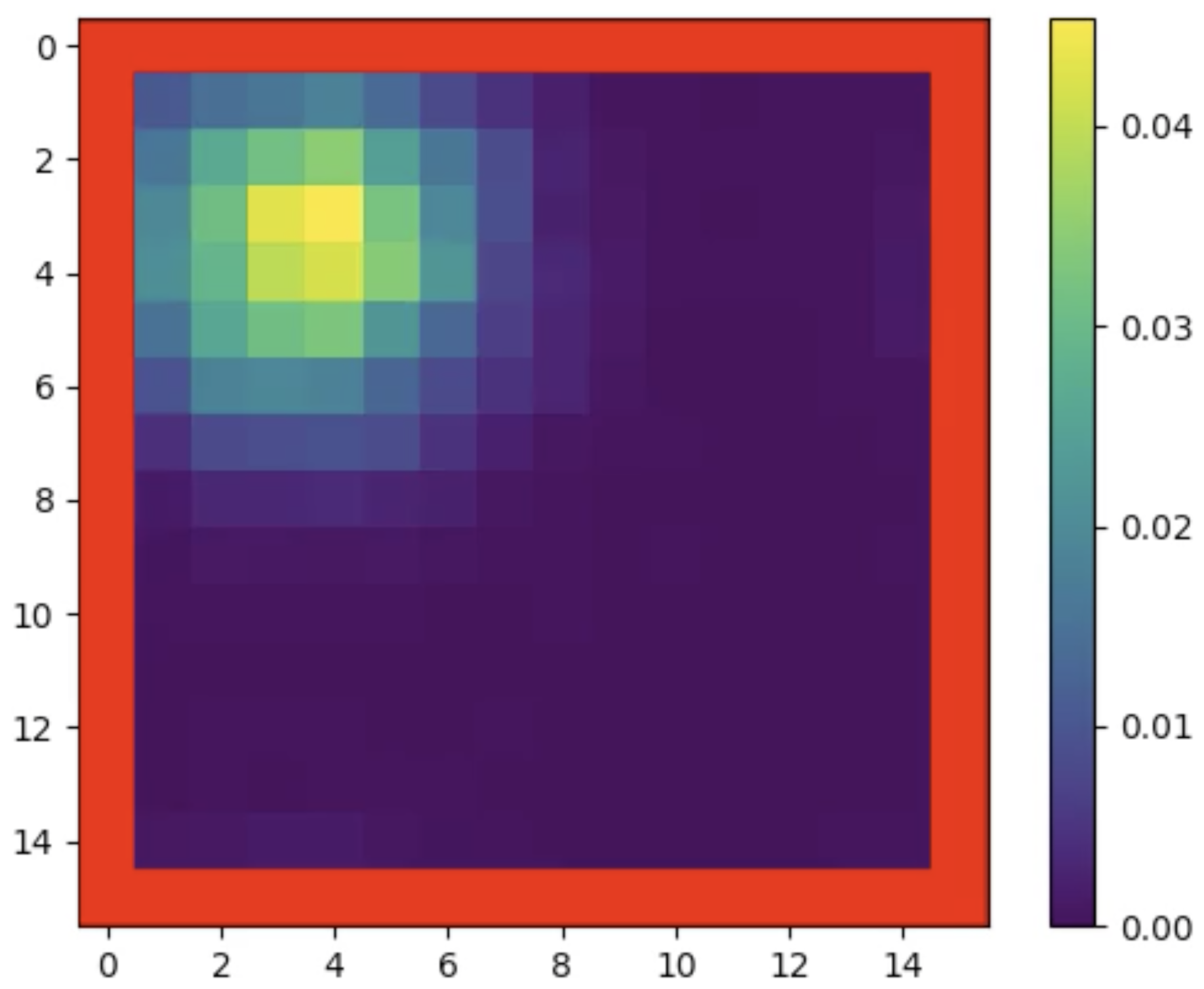}
    \end{minipage}
    \begin{minipage}{.23\textwidth}
    \includegraphics[width=\linewidth]{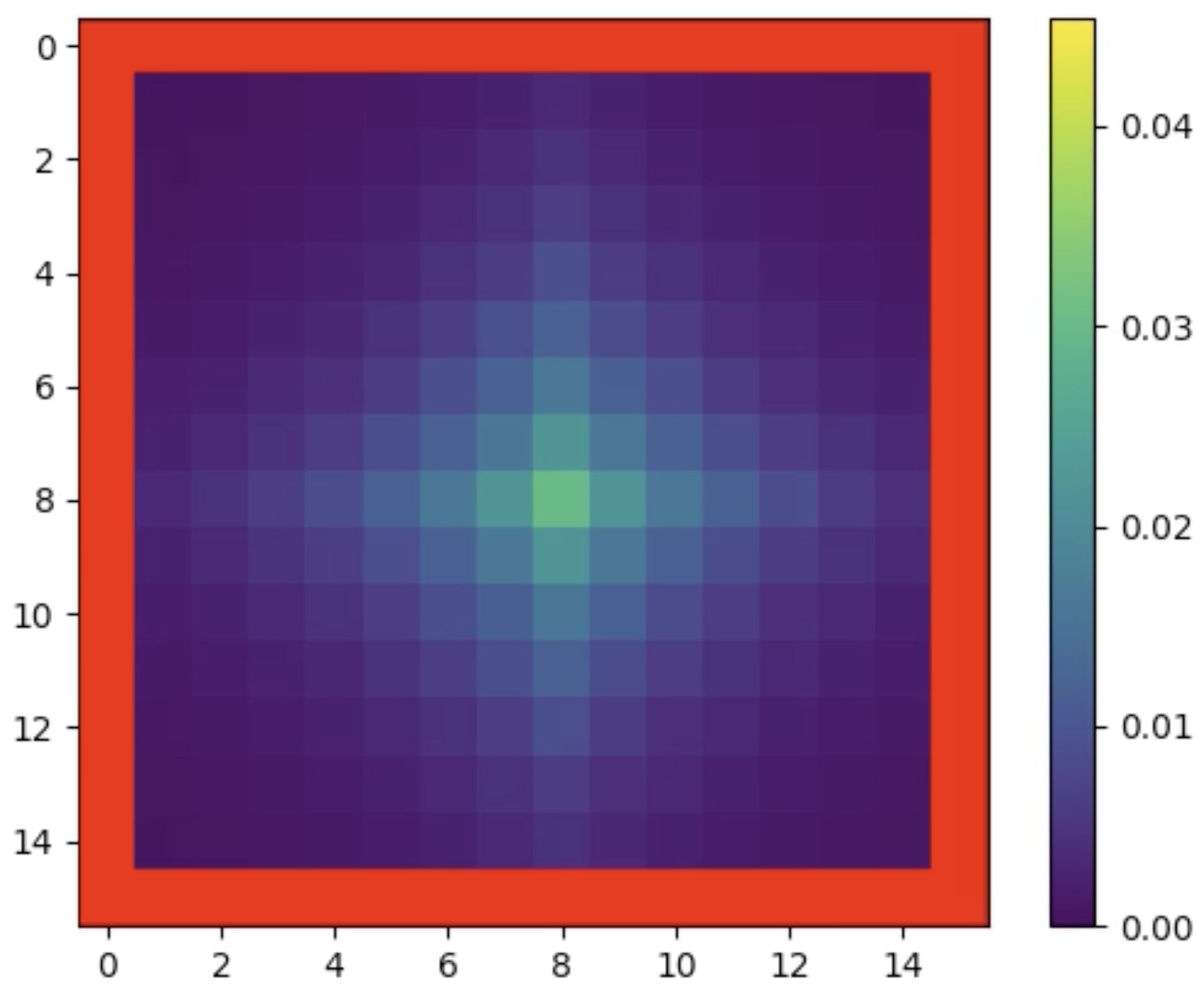}
    \end{minipage}
    \begin{minipage}{.23\textwidth}
    \includegraphics[width=\linewidth]{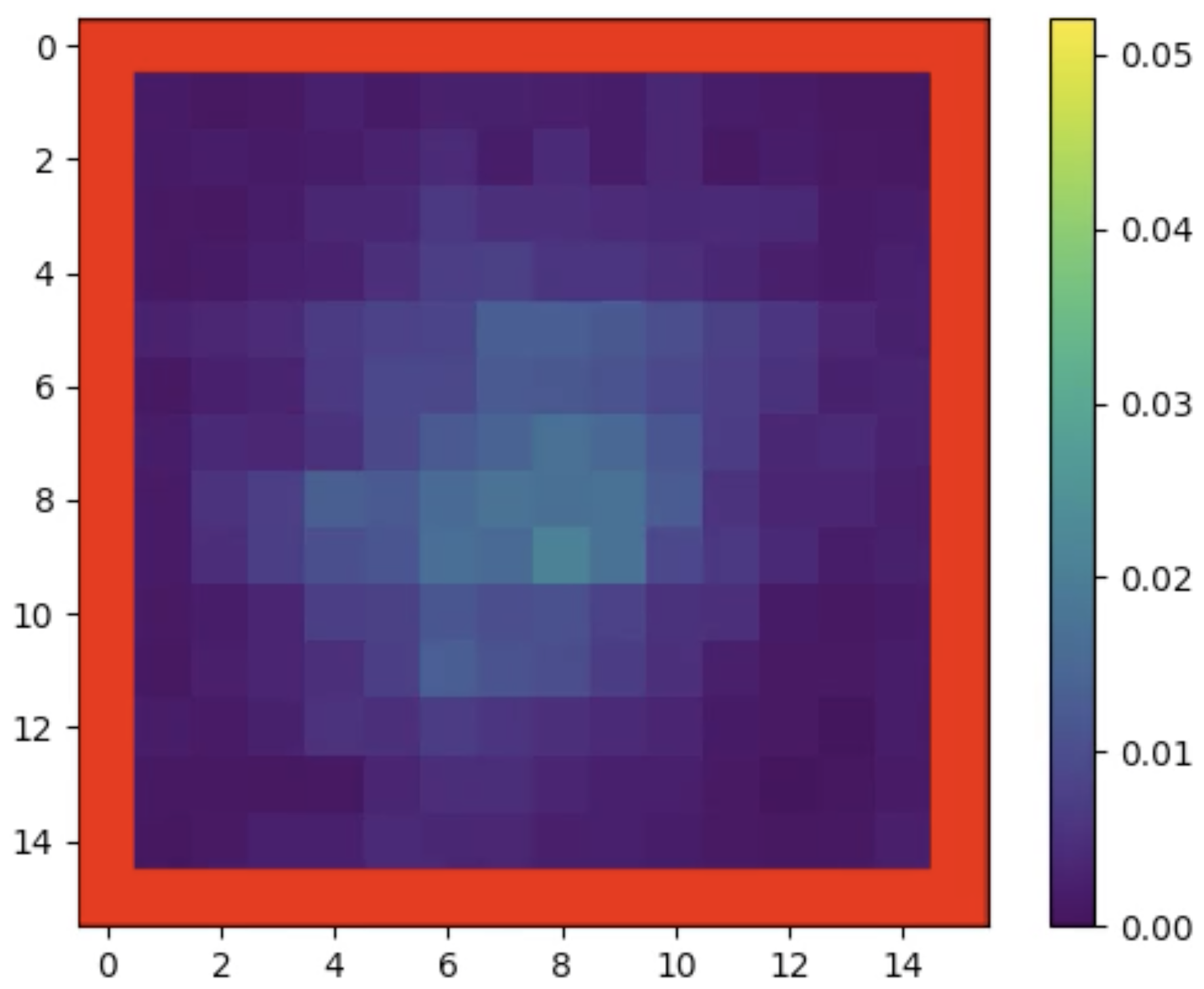}
    \end{minipage}
    \begin{minipage}{.23\textwidth}
    \includegraphics[width=\linewidth]{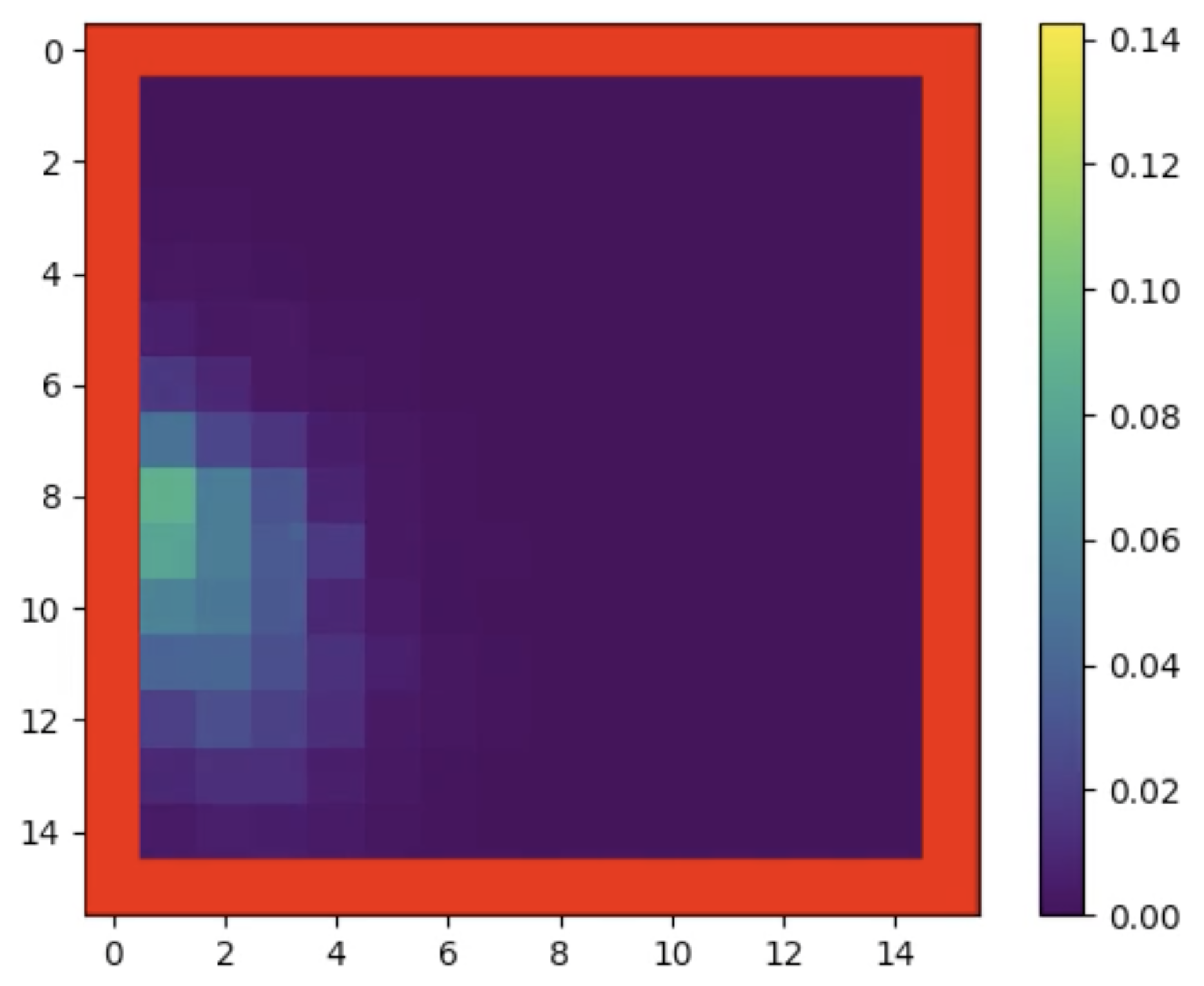}
    \end{minipage}
    
    \caption{\textbf{Beach bar 2D: Environment.} 
    From left to right: \textbf{(a)} an initial distribution $\mu_0 \in \trainset$; \textbf{(b)} MF state at equilibrium (specialized policy); \textbf{(c)} MF state at equilibrium (learned Master policy); \textbf{(d)} MF state at equilibrium (specialized policy of another initial distribution). Note that the scale is very different for the last figure.  
    }
    \label{fig:2D_different_distributions}
\end{figure*}

\subsection{Experiment 2: Beach bar in 2D}

We now consider the 2 dimensional beach bar problem, introduced by \citet{perrin2020fictitious}, to highlight that the method can scale to larger environments. The state space is a discretization of a 2-dimensional square. The agents can move by one state in the four directions: up, down, left, right, but there are walls on the boundaries. The instantaneous reward is:
$
    r(x, a, \mu) = d_{\mathrm{bar}}(x) -\log(\mu(x)) - \tfrac {1}{|X|}\|a\|_1,
$ 
where $d_{\mathrm{bar}}$ is the distance to the bar, located at the center of the domain.  Here again, the second term discourages the agent from being in a crowded state, while the last term discourages them from moving if it is not necessary. Starting from an initial distribution, we expect the agents to move towards the bar while spreading a bit to avoid suffering from congestion.%

We use the aforementioned architecture (Fig.~\ref{fig:NN}) with one fully connected network following two ConvNets: one for the agent's state, represented as a one-hot matrix, and one for the MF state, represented as a histogram. 
Having the same dimension (equal to the number $|\states|$ of states) and architecture for the position and the distribution makes it easier for the deep neural network to give an equal importance to both of these features. Deep RL is crucial to cope with the high dimensionality of the input. Here $|\states| = 16^2 = 256$. %

Fig.~\ref{fig:2D_metrics} illustrates the performance of the learned Master policy. Once again, it outperforms the specialized policies as well as the random, mixture-reward, and unconditioned policies. An illustration of the environment and of the different policies involved is available in Fig.~\ref{fig:2D_different_distributions}.
\begin{figure}[tbh]
    \centering
    \begin{minipage}{.23\textwidth}
    \includegraphics[width=\linewidth]{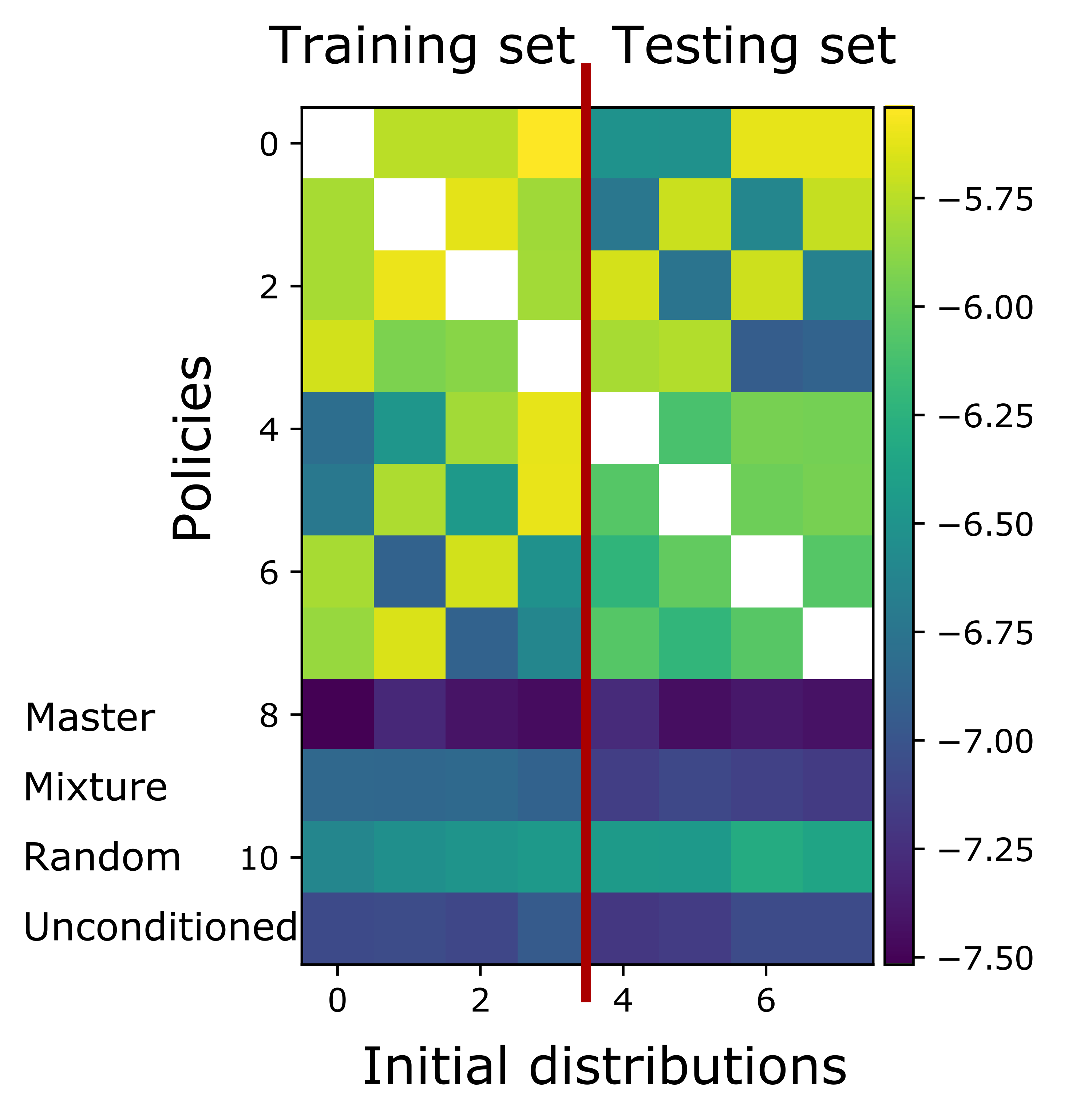}
    \end{minipage}
    \begin{minipage}{.23\textwidth}
    \includegraphics[width=\linewidth]{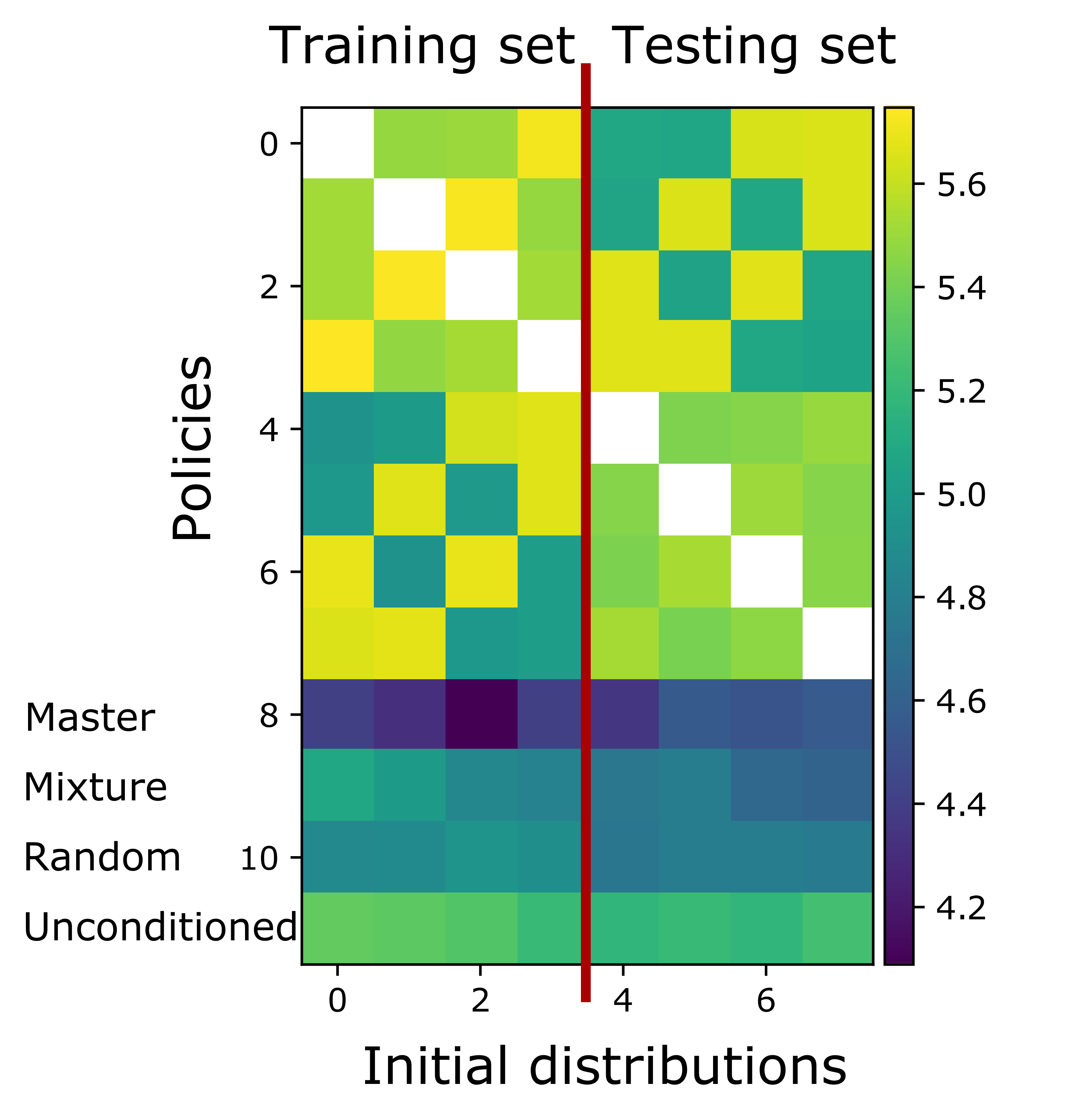}
    \end{minipage}
    
    \caption{\textbf{Beach bar 2D: Performance matrices with Gaussian distributions.} 
    From left to right: \textbf{(a)} Log of Wasserstein distances to the exact solution (average over time steps); \textbf{(b)} Log of exploitabilities.
    }
    \label{fig:2D_metrics}
\end{figure}
\section{Conclusion}

Motivated by the question of generalization in MFGs, we extended the notion of policies to let them depend explicitly on the population distribution. This allowed us to introduce the concept of Master policy, from which a representative player is able to play an optimal policy against any population distribution, as we proved in Thm.~\ref{thm:usual-to-master}. We then proved that a continuous time adaptation of Fictitious Play can approximate the Master policy at a linear rate (Thm.~\ref{thm:convergence-monotone-mf0}). However, implementing this method is not straightforward because policies and value functions are now functions of the population distribution and, hence, out of reach for traditional computational methods. We thus proposed a Deep RL-based algorithm to compute an approximate Master policy. Although this algorithm trains the Master policy using a small training set of distributions, we demonstrated numerically that the learned policy is competitive on a variety of unknown distributions. In other words, for the first time in the RL for MFG literature, our approach allows the agents to generalize and react to many population distributions. 
This is in stark contrast with the existing literature, which focuses on learning population-agnostic policies, see \textit{e.g.}~\citep{guo2019learning,anahtarci2020q,fu2019actorcritic,elie2020convergence,perrin2021flocking}. To the best of our knowledge, the only work considering policies that depend on the population is \citep{mishra2020model}, but their approach relies on solving a fixed point at each time step for every possible distribution, which is infeasible except for very small state space. %
A deep learning approach to solve finite-state Master equations has been proposed in~\citep[Section 7.2]{lauriere2021numericalAMS}, but it is based on a PDE viewpoint and hence on the full knowledge of the model. Some numerical aspects of Bellman equations involving population distributions have also been discussed in the context of mean field control, with knowledge of the model~\citep{germain2021deepsets} or without~\citep{carmona2019modelfreemfrl,gu2020meanfieldcontrols,mottepham2019mean}. However, these approaches deal only with optimal control problems and not Nash equilibria as in our case. Furthermore, none of these works treated the question of generalization in MFGs.

\looseness=-1
Our approach opens many directions for future work. First, the algorithm we proposed should be seen as a proof of concept and we plan to investigate other methods, such as Online Mirror Descent~\citep{hadikhanloo2017learningnonatomic,perolat2021scaling}. For high-dimensional examples, the question of distribution embedding deserves a special attention. Second, the generalization capabilities of the learned Master policy offers many new possibilities for applications. Last, the theoretical properties (such as the approximation and generalization theory) are also left for future work. An interesting question, from the point of view of learning is choosing the training set so as to optimize generalization capabilities of the learned Master policy.

\bibliographystyle{plainnat}
\bibliography{mfgrlbib}

\onecolumn

\appendix

\section{Notations used in the text}

The main notations used in the text are summarized in the following table. Please note that $\pol$, $\avgpol$ and $\brpol$ are population-agnostic policies, while $\poppol$, $\avgpoppol$ and $\masterpol$ are population-dependent policies.
\begin{center}
\begin{tabular}{| l | l |}
\hline
 Policy & $\pol \in \Pol$ \\ 
\hline
 Average policy & $\avgpol \in \Pol$ \\  
\hline
 Equilibrium policy & $\brpol \in \Pol$ \\
\hline
 Population-dependent policy & $\poppol \in \Poppol$\\
\hline
 Average population-dependent policy & $\avgpoppol \in \Poppol$ \\
\hline
 Master policy & $\masterpol \in \Poppol$ \\
\hline
 Mean field state & $\smf \in \sMF$ \\
\hline
 Mean field flow & $\mf \in \MF$ \\
\hline
 Training set of initial distributions & $\trainset \subset \sMF$ \\
\hline
\end{tabular}
\end{center}

\section{Details on the Experiments}

\paragraph{Wasserstein distance.} The Wasserstein distance $W$ (or earth mover's distance) measures the minimum cost of turning one distribution into another: for $\mu,\mu' \in \sMF = \Delta_\states$, 
$$ 
    W(\mu,\mu')
    = \inf_{\nu \in \Gamma(\mu,\mu')} \sum_{(x,x') \in \states \times \states} d(x, x') \nu(x,x'),
$$
where $\Gamma(\mu,\mu')$ is the set of probability distributions on $\states\times\states$ with marginals $\mu$ and $\mu'$. %
This notion is well defined if the state space has a natural notion of distance $d$, which is the case in our numerical examples because they come from the discretization of 1D or 2D Euclidean domains.

\paragraph{Initial distributions. }
We provide here a representation of the initial distributions used in the experiments.

For the pure exploration model in 1D, the training and testing sets are represented in Fig.~\ref{fig:1D_training_set} and Fig.~\ref{fig:1D_testing_set} respectively.

\begin{figure*}[htbp]
    \centering
    \begin{minipage}{.23\textwidth}
    \includegraphics[width=\linewidth]{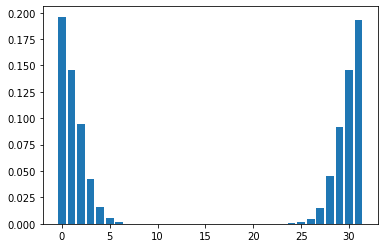}
    \end{minipage}
    \begin{minipage}{.23\textwidth}
    \includegraphics[width=\linewidth]{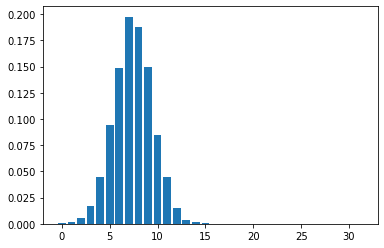}
    \end{minipage}
    \begin{minipage}{.23\textwidth}
    \includegraphics[width=\linewidth]{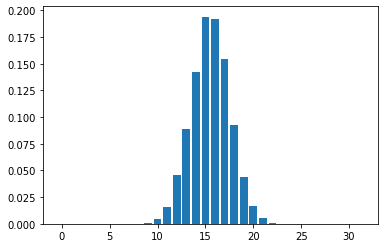}
    \end{minipage}
    \begin{minipage}{.23\textwidth}
    \includegraphics[width=\linewidth]{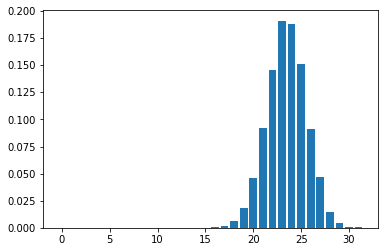}
    \end{minipage}
    
    \caption{Pure exploration 1D: Training set
    }
    \label{fig:1D_training_set}
\end{figure*}

\begin{figure*}[htbp]
    \centering
    \begin{minipage}{.23\textwidth}
    \includegraphics[width=\linewidth]{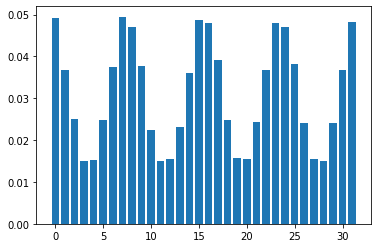}
    \end{minipage}
    \begin{minipage}{.23\textwidth}
    \includegraphics[width=\linewidth]{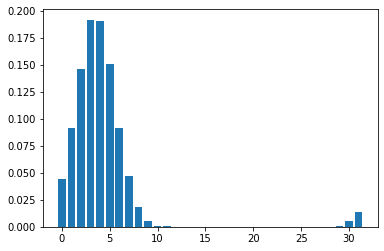}
    \end{minipage}
    \begin{minipage}{.23\textwidth}
    \includegraphics[width=\linewidth]{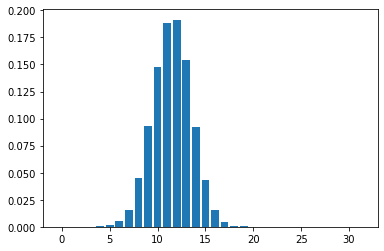}
    \end{minipage}
    \begin{minipage}{.23\textwidth}
    \includegraphics[width=\linewidth]{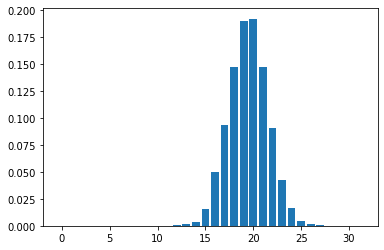}
    \end{minipage}

    \begin{minipage}{.23\textwidth}
    \includegraphics[width=\linewidth]{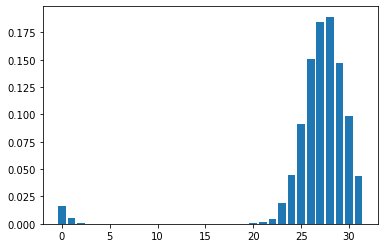}
    \end{minipage}
    \begin{minipage}{.23\textwidth}
    \includegraphics[width=\linewidth]{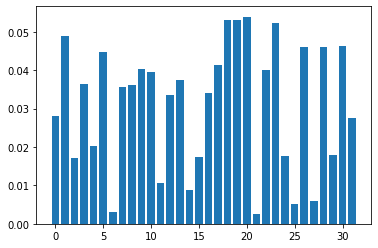}
    \end{minipage}
    \begin{minipage}{.23\textwidth}
    \includegraphics[width=\linewidth]{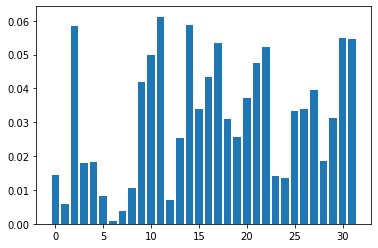}
    \end{minipage}
    \begin{minipage}{.23\textwidth}
    \includegraphics[width=\linewidth]{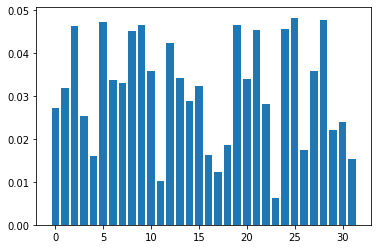}
    \end{minipage}

    \begin{minipage}{.23\textwidth}
    \includegraphics[width=\linewidth]{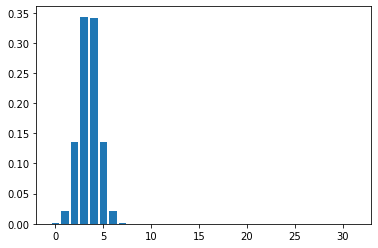}
    \end{minipage}
    \begin{minipage}{.23\textwidth}
    \includegraphics[width=\linewidth]{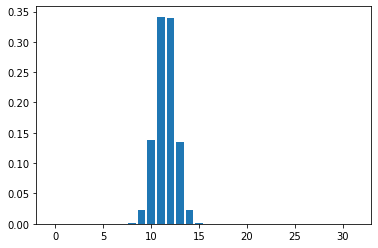}
    \end{minipage}
    \begin{minipage}{.23\textwidth}
    \includegraphics[width=\linewidth]{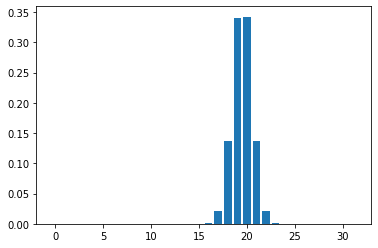}
    \end{minipage}
    \begin{minipage}{.23\textwidth}
    \includegraphics[width=\linewidth]{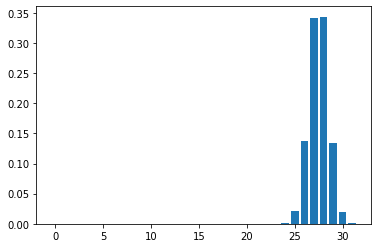}
    \end{minipage}

    \begin{minipage}{.23\textwidth}
    \includegraphics[width=\linewidth]{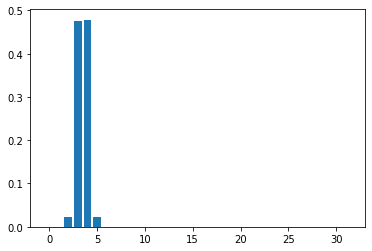}
    \end{minipage}
    \begin{minipage}{.23\textwidth}
    \includegraphics[width=\linewidth]{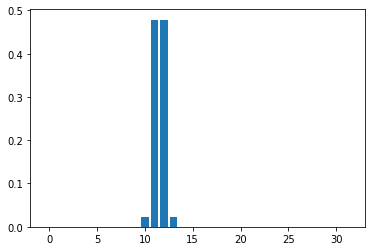}
    \end{minipage}
    \begin{minipage}{.23\textwidth}
    \includegraphics[width=\linewidth]{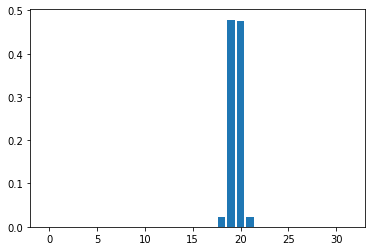}
    \end{minipage}
    \begin{minipage}{.23\textwidth}
    \includegraphics[width=\linewidth]{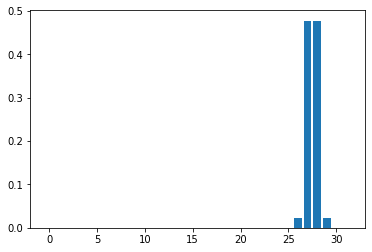}
    \end{minipage}

    \begin{minipage}{.23\textwidth}
    \includegraphics[width=\linewidth]{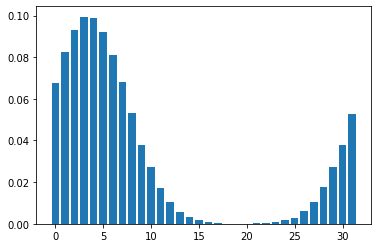}
    \end{minipage}
    \begin{minipage}{.23\textwidth}
    \includegraphics[width=\linewidth]{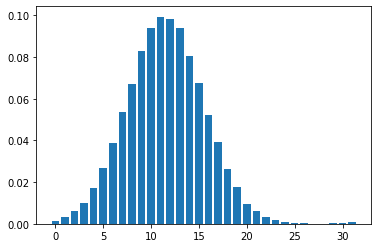}
    \end{minipage}
    \begin{minipage}{.23\textwidth}
    \includegraphics[width=\linewidth]{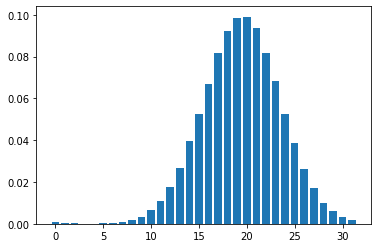}
    \end{minipage}
    \begin{minipage}{.23\textwidth}
    \includegraphics[width=\linewidth]{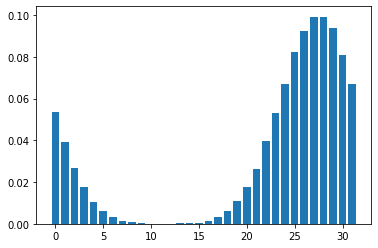}
    \end{minipage}
    
    \caption{Pure exploration 1D: Testing set
    }
    \label{fig:1D_testing_set}
\end{figure*}

For the beach bar model in 2D, the training and testing sets are represented in Fig.~\ref{fig:2D_training_set} and Fig.~\ref{fig:2D_testing_set} respectively.

\begin{figure*}[htbp]
    \centering
    \begin{minipage}{.23\textwidth}
    \includegraphics[width=\linewidth]{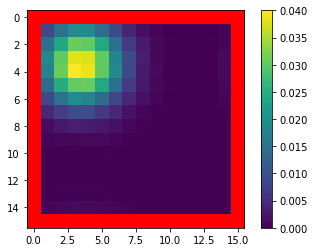}
    \end{minipage}
    \begin{minipage}{.23\textwidth}
    \includegraphics[width=\linewidth]{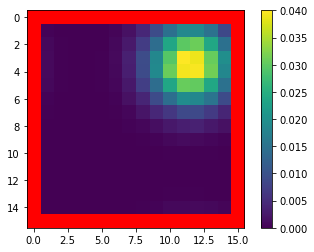}
    \end{minipage}
    \begin{minipage}{.23\textwidth}
    \includegraphics[width=\linewidth]{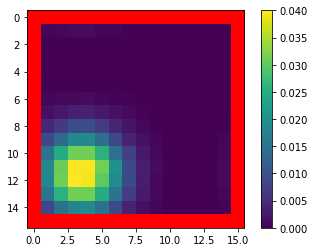}
    \end{minipage}
    \begin{minipage}{.23\textwidth}
    \includegraphics[width=\linewidth]{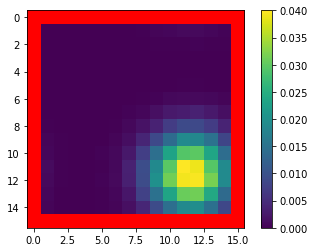}
    \end{minipage}
    
    \caption{Beach bar 2D: Training set
    }
    \label{fig:2D_training_set}
\end{figure*}

\begin{figure*}[htbp]
    \centering
    \begin{minipage}{.23\textwidth}
    \includegraphics[width=\linewidth]{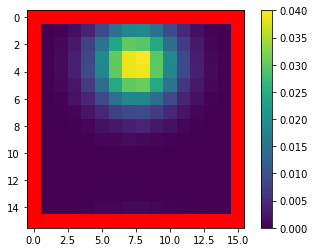}
    \end{minipage}
    \begin{minipage}{.23\textwidth}
    \includegraphics[width=\linewidth]{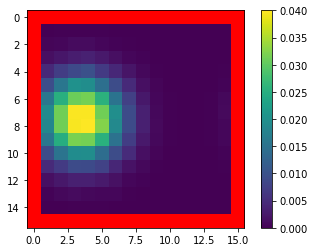}
    \end{minipage}
    \begin{minipage}{.23\textwidth}
    \includegraphics[width=\linewidth]{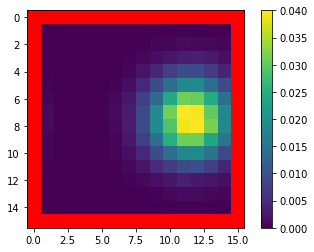}
    \end{minipage}
    \begin{minipage}{.23\textwidth}
    \includegraphics[width=\linewidth]{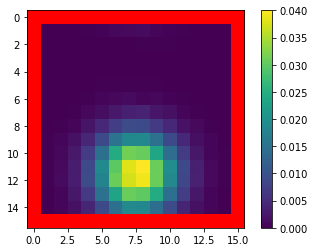}
    \end{minipage}
    
    \caption{Beach bar 2D: Testing set
    }
    \label{fig:2D_testing_set}
\end{figure*}

\section{Learning a population-dependent policy with Deep RL}\label{sec:DQN-algo}

\begin{algorithm2e}[ht!]
\SetAlgoLined
\DontPrintSemicolon
\SetKwInOut{Input}{input}\SetKwInOut{Output}{output}
\Input{Initial weights $\theta_k$ and $\theta_k'$ for network $\widetilde Q_{\theta_k}$ and target network $\widetilde Q_{\theta_k'}$; training set $\trainset$ of initial distributions; set $\bar \trainset_k$ of average MF flows; 
number of episodes $N_{\mathrm{episodes}}$; number of inner steps $N$; horizon $N_T$ for estimation; number of steps $C$ between synchronization of the two networks; parameter $\epsilon \in [0,1]$ for exploration}
\caption{DQN for a population-dependent Best Response\label{algMBR}}
Initialize weights $\theta_k$ of network $\widetilde Q_{\theta_k}$ and weights $\theta_k'$ of target network $\widetilde Q'_{\theta_k'}$\;
Initialize replay memory $B$ \;
    
    \For{$e=1, \dots, N_{\mathrm{episodes}}$}{
        Sample initial $\smf_0 \in \trainset$ and get the associated $\bar\mf^{\mu_0}_k$ from $\bar \trainset_k$\;
        Sample $x_0 \sim \smf_0$\;
        \For{$n=0, \dots, N-1$}{
            With probability $\epsilon$ select random action $a_n$, otherwise select $a_n \in \argmax_a \widetilde Q'_k(a|x_n, \bar\mf^{\mu_0}_{k,n})$\;
            Execute action $a_n$, observe reward $r_n$ and state $x_{n+1}$\;
            
            Add the transition $(x_n, a_n, \bar\mf^{\mu_0}_{k,n}, r_n, \bar\mf^{\mu_0}_{k,n+1})$ to $B$\;
            }
            Sample a random minibatch of $N_T$ transitions $\{(x_n, a_n, \smf_n, r_n, \smf_{n+1}), n=1,\dots,N_T\}$ from $B$\;
            Let  $v_n = r_n + \gamma \max_{a'} \widetilde Q_{\theta_k}(x_{n+1},\smf_{n+1}, a')$ for $n=1,\dots,N_T$\;
            Update $\theta_k$ by performing a gradient step in the direction of minimizing w.r.t. $\theta$: 
                $$
                \frac{1}{N_T}\sum_{n=1}^{N_T}\Big|v_n - \widetilde Q_{\theta}(x_n,\smf_{n}, a_n)\Big|^2
                $$\;
            Every $C$ steps, copy weights $\theta_k$ of $\widetilde Q_{\theta_k}$ to the weights $\theta_k'$ of $\widetilde Q'_{\theta_k'}$\;
        
    }
\Return{\normalfont $\widetilde Q_{\theta_k} $}
\end{algorithm2e}

Recall that in line~4 of Alg.~\ref{algMEFP}, we want solve an MDP which is stationary because we have put the distribution $\smf$ as an input together with the agent's state $x$. To this end, we use DQN and, as described in Alg.~\ref{algMBR}, we use a finite horizon approximation $N_T$. This approximation is common in the literature and is not problematic as we set the horizon high enough so that the stationary population distribution can be (approximately) reached.

\section{Proof of Theorem~\ref{thm:usual-to-master}}

\begin{proof}[Proof of Theorem~\ref{thm:usual-to-master}]

    By assumption, for every $\smf_0$, there is a unique equilibrium MF flow $\hat\mf^{\smf_0}$. We also consider an associated equilibrium (population-agnostic) policy $\hat\pol^{\smf_0}$ (if there are multiple choices of such policies, we take one of them). The superscript is used to stress the dependence on the initial MF state. Let us define the following population dependent policy:
    \begin{equation}
    \label{eq:def-masterpol-hatpol}
        \poppol(x,\smf_0) := \hat\pol^{\smf_0}_0(x).
    \end{equation}
    
    We prove that any population-dependent policy defined in the above way is a master policy, \textit{i.e.}, for each $\smf_0$ it gives an equilibrium policy not only at initial time but at all time steps.

    Fix $\smf_0$. Let $\tilde\mf^{\smf_0}$ and $\tilde\pol^{\smf_0}$ be the MF flow and the population-agnostic policy induced by using $\poppol$ starting from $\smf_0$, \textit{i.e.}, for $n \ge 0$,
    \begin{equation}
    \label{eq:tilde-pol-poppol-evol}
        \tilde\pol^{\smf_0}_n(x) = \poppol(x,\tilde\mf^{\smf_0}_n), 
        \qquad
        \tilde\mf^{\smf_0}_{n+1} = \phi(\tilde\mf^{\smf_0}_n, \tilde\pol^{\smf_0}_n). 
    \end{equation}
    We check that it is a Nash equilibrium starting with $\smf_0$. The second condition in Def.~\ref{def:MFG-NE} is automatically satisfied by definition of $\tilde\mf^{\smf_0}_{n+1}$, see~\eqref{eq:tilde-pol-poppol-evol}. For the optimality condition, we proceed by induction to show that for every $n \ge 0$, $\tilde\mf^{\smf_0}_{n} = \hat\mf^{\smf_0}_{n}$, which is the unique equilibrium MF flow starting from $\smf_0$. Note first that, by~\eqref{eq:def-masterpol-hatpol} and dynamic programming,
    $$
        \tilde\pol^{\smf_0}_0(x) = \poppol(x,\smf_0)
        = \hat\pol^{\smf_0}_0(x)
        \in \argmax_{\spol \in \sPol} \mathbb{E} \Big[r(x, a, \smf_0) + \gamma \hat V(x_1; \hat\mf^{\smf_0}_1)\; \Big| \;  x_{1} \sim p(.|x, a, \smf_0), \; a \sim \spol(.|x) \Big],
    $$
    where $\hat V$ is the stationary and population-dependent value function for a representative agent facing a population playing according to a Nash equilibrium starting from a given distribution.  Moreover, 
    $$
        \tilde\mf^{\smf_0}_{1}
        = \phi(\tilde\mf^{\smf_0}_0, \tilde\pol^{\smf_0}_0)
        = \phi(\smf_0, \poppol(\cdot,\smf_0))
        = \phi(\smf_0, \hat\pol^{\smf_0}_0)
        = \hat\mf^{\smf_0}_{1},
    $$
    where we used~\eqref{eq:tilde-pol-poppol-evol} for the first and second equalities, and~\eqref{eq:def-masterpol-hatpol} for the third equality. The last equality holds because $(\hat\mf^{\smf_0},\hat\pol^{\smf_0})$ is an MFG Nash equilibrium consistent with $\smf_0$.  So:
     $$
        \tilde\pol^{\smf_0}_0(x) 
        \in \argmax_{\spol \in \sPol} \mathbb{E} \Big[r(x, a, \smf_0) + \gamma \hat V(x_1; \tilde\mf^{\smf_0}_1)\; \Big| \;  x_{1} \sim p(.|x, a, \smf_0), \; a \sim \spol(.|x) \Big],
    $$

    At time $n \ge 1$, for the sake of induction, assume $\tilde\mf^{\smf_0}_{i} = \hat\mf^{\smf_0}_{i}$ for all $i \le n$. Then
    \begin{equation}
    \label{eq:poln-hatpol0}
        \tilde\pol^{\smf_0}_n(x)
        = \poppol(x, \tilde\mf^{\smf_0}_n)
        =\hat\pol^{\tilde\mf^{\smf_0}_n}_0(x)
        \in \argmax_{\spol \in \sPol} \mathbb{E} \Big[r(x, a, \tilde\mf^{\smf_0}_n) + \gamma \hat V(x_1; \hat\mf^{\tilde\mf^{\smf_0}_{n}}_1)\; \Big| \;  x_{1} \sim p(.|x, a, \tilde\mf^{\smf_0}_n), \; a \sim \spol(.|x) \Big].
    \end{equation}
    Moreover,
    $$
        \tilde\mf^{\smf_0}_{n+1}
        = \phi(\tilde\mf^{\smf_0}_{n}, \tilde\pol^{\smf_0}_{n})
        = \phi(\hat\mf^{\smf_0}_{n}, \hat\pol^{\hat\mf^{\smf_0}_{n}}_0)
        \underbrace{=}_{(\star)} \phi(\hat\mf^{\smf_0}_{n}, \hat\pol^{\smf_0}_{n})
        = \hat\mf^{\smf_0}_{n+1},
    $$
    where the first equality is by~\eqref{eq:tilde-pol-poppol-evol} and the second equality is by the induction hypothesis and~\eqref{eq:poln-hatpol0}. 
    Equality $(\star)$ means that the population distributions generated at the next time step by $\hat\pol^{\hat\mf^{\smf_0}_n}_0$ and $\hat\pol^{\smf_0}_{n}$ when starting from $\hat\mf^{\smf_0}_{n}$ are the same (although these two policies could be different). This is because both of them are best responses to this population distribution and because we assumed uniqueness of the equilibrium MF flow. Indeed, by definition, $\hat\pol^{\hat\mf^{\smf_0}_n}_0$ is the initial step of a policy which is part of an MFG Nash equilibrium consistent with $\hat\mf^{\smf_0}_n$. Furthermore, $(\hat\pol^{\smf_0}_{n}, \hat\mf^{\smf_0}_{n})_{n \ge 0}$ is an MFG Nash equilibrium consistent with $\smf_0$ and, as a consequence, for any $n_0 \ge 0$, $(\hat\pol^{\smf_0}_{n}, \hat\mf^{\smf_0}_{n})_{n \ge n_0}$ is an MFG Nash equilibrium consistent with $\hat\mf^{\smf_0}_{n_0}$.

   We conclude that $(\star)$ holds by using the fact that we assumed uniqueness of the equilibrium MF flow so these two policies must have the same result in terms of generated population distribution. 

    So we proved that $\tilde\mf^{\smf_0}_{n} = \hat\mf^{\smf_0}_{n}$ for all $n \ge 0$ and $(\tilde\pol^{\smf_0}_{n})_{n \ge 0}$ is an associated equilibrium policy.

\end{proof}

\section{On the Convergence of Master Fictitious Play}

In this section we study the evolution of the averaged MF flow generated by the Master Fictitious Play algorithm, see Alg.~\ref{algMEFP}. We then introduce a continuous time version of this algorithm and prove its convergence at a linear rate. 

\paragraph{On the mixture of policies. }

Given $\avgpoppol_K = \text{UNIFORM}(\poppol_1, \dots, \poppol_K)$ and given an initial $\smf_0$, we first compute an average population distribution composed of $K$ subpopulations where subpopulation $k$ uses $\poppol_k$ to react to the current average population. Formally, recall that we define:
$$
\begin{cases}
    \mf_{k,0}^{\smf_0} = \smf_0, \qquad k=1,\dots,K
    \\
    \bar\mf_{K,0}^{\smf_0} = \frac{1}{K} \sum_{k=1}^K \mf_{k,0}^{\smf_0}
\end{cases}
$$
and for $n \ge 0$,
$$
\begin{cases}
    \mf_{k,n+1}^{\smf_0} = \phi(\mf_{k,n}^{\smf_0}, \poppol_k(\cdot|\cdot,\bar\mf_{K,n}^{\smf_0})), \qquad k=1,\dots,K
    \\
    \bar\mf_{K,n+1}^{\smf_0} = \frac{1}{K} \sum_{k=1}^K \mf_{k,n+1}^{\smf_0}.
\end{cases}
$$
We recall that the notation $\mf_{k,n+1}^{\smf_0} = \phi(\mf_{k,n}^{\smf_0}, \poppol_k(\cdot|\cdot,\bar\mf_{K,n}^{\smf_0}))$ means:
\begin{equation}
    \label{eq:avg-pop-evol-K}
    \mf_{k,n+1}^{\smf_0}(x) = \phi(\mf_{k,n}^{\smf_0}, \poppol_k(\cdot|x,\bar\mf_{K,n}^{\smf_0}))
    = \sum_{x'} \mf_{k,n}^{\smf_0}(x') \sum_{a} \poppol_k(a|x',\bar\mf_{K,n}^{\smf_0}) p(x|x',a,\bar\mf_{K,n}^{\smf_0}) , \qquad x \in \states.
\end{equation}
Hence: for all $x \in \states$,
\begin{align}
    \bar\mf_{K,n+1}^{\smf_0}(x)
    &= \frac{1}{K} \sum_{k=1}^K \sum_{x'} \mf_{k,n}^{\smf_0}(x') \sum_{a} \poppol_k(a|x',\bar\mf_{K,n}^{\smf_0}) p(x|x',a,\bar\mf_{K,n}^{\smf_0}) 
    \label{eq:evol-mfbar-K}
    \\
    &=  \sum_{x'} \bar\mf_{K,n}^{\smf_0}(x')  \sum_{a} \underbrace{\left( \frac{1}{K} \sum_{k=1}^K \frac{\mf_{k,n}^{\smf_0}(x')}{\bar\mf_{K,n}^{\smf_0}(x')} \poppol_k(a|x',\bar\mf_{K,n}^{\smf_0}) \right)}_{=:\bar{\poppol}_{K,n}^{\smf_0}(a|x',\bar\mf_{K,n}^{\smf_0})} p(x|x',a,\bar\mf_{K,n}^{\smf_0}),
    \label{eq:evol-mfbar-K-polK}
\end{align}
where, in the last expression, the first sum over $\{x' \in\states: \bar\mf_{K,n}^{\smf_0}(x')>0 \}$.
So the evolution of the average population can be interpreted as the fact that all the agents use the policy $\bar{\poppol}_{K,n}^{\smf_0}(a|x',\bar\mf_{K,n}^{\smf_0})$ given by the terms between parentheses above. Note that this policy depends on $\smf_0$ and $n$.

We then consider the reward obtained by an infinitesimal player from the average population. This player belongs to subpopulation $k$ with probability $1/K$. So the reward can be expressed as:
$$
    \frac{1}{K} \sum_{k=1}^K J(\smf_0, \poppol_k; \bar\mf_{K}^{\smf_0}).
$$
We expect that when $K \to +\infty$ (\textit{i.e.}, we run more iterations of the Master Fictitious Play algorithm, see Alg.~\ref{algMEFP}), then this quantity converges to the one obtained by a typical player in the Nash equilibrium starting from $\smf_0$, \textit{i.e.}:
$$
    J(\smf_0, \hat\pol; \hat\mf^{\smf_0})
$$
where $\hat\mf^{\smf_0} = \Phi(\smf_0, \hat\pol)$ with $\hat\pol \in \argmax_{\pol} J(\smf_0, \pol; \hat\mf^{\smf_0})$.

Note that $\bar{\poppol}_{K,n}^{\smf_0}(a|x',\bar\mf_{K,n}^{\smf_0})$ takes $\bar\mf_{K,n}^{\smf_0}$ as an input. However, this dependence is superfluous because $\bar\mf_{K,n}^{\smf_0}$ can be derived from $\smf_0$ and $(\bar{\poppol}_{K,m}^{\smf_0}(a|x',\bar\mf_{K,m}^{\smf_0}))_{m \le n}$. Proceeding by induction, we can show that there exists $\bar{\pol}_{K}^{\smf_0} \in \Pol$ s.t.
$$
    \bar{\poppol}_{K,n}^{\smf_0}(a|x',\bar\mf_{K,n}^{\smf_0}) 
    = 
    \bar{\pol}_{K,n}^{\smf_0}(a|x')
$$

\paragraph{Continuous Time Master Fictitious Play.} 

We now describe the Continuous Time Master Fictitious Play (CTMFP) scheme in our setting. Here the iteration index $k \in\{1,2,3,\dots\}$ is replaced by a time $t$, which takes continuous values in $[1,+\infty)$. Intuitively, it corresponds to the limiting regime where the updates happen continuously. 

Based on~\eqref{eq:evol-mfbar-K}, we introduce the CTMFP mean-field flow defined for all $t\geq 1$ by: $\bar \mf_{t,n}^{\smf_0} = \mf_{t,n}^{\smf_0,\BR} = \smf_0$,  and for $n = 1,2,\dots,$
\begin{align}
    \bar \mf_{t,n}^{\smf_0} (x) = \frac{1}{t} \int \limits_{s=0}^t \mf_{s,n}^{\smf_0,\BR} (x) ds, \quad
    \textrm{ or in differential form: }
    \quad \frac{d}{dt} \bar \mf_{t,n}^{\smf_0} (x) = \frac{1}{t}\left(\mf_{t,n}^{\smf_0,\BR} (x)-\bar \mf_{t,n}^{\smf_0} (x)\right)\,,
    \label{eq:dyn-barmu-mf0}
\end{align}
where $\mf_{t,n}^{\smf_0,\BR}$ denotes the distribution induced by a best response policy $(\pol_{t,n}^{\smf_0,\BR})_{n \ge 0}$ against $\bar\mf_{t,n}^{\smf_0} (x)$.

As in~\eqref{eq:evol-mfbar-K-polK} for the discrete update case, the distribution $\bar\mf_{t,n}^{\smf_0}$ corresponds to the population distribution induced by the averaged policy $(\bar\pol_{t,n}^{\smf_0})_{n}$ defined as follows: for all $n = 1,2,\dots,$ and all $t \ge 1$:
\begin{align}
    &\;\bar\pol_{t,n}^{\smf_0}(a|x) \int \limits_{s=0}^t \mf_{s,n}^{\smf_0,\BR}(x) ds = \int \limits_{s=0}^t \mf_{s,n}^{\smf_0,\BR}(x) \pol_{s,n}^{\smf_0,\BR}(a|x) ds
    \label{eq:dyn-barmu-barpi-mf0}
    \\
    &\textrm{ or in differential form: } \bar\mf_{t,n}^{\smf_0}(x) \frac{d}{dt} \bar\pol_{t,n}^{\smf_0}(a|x)  = \frac{1}{t} \mf_{t,n}^{\smf_0,\BR}(x) [\pol_{t,n}^{\smf_0,\BR}(a|x) - \bar\pol_{t,n}^{\smf_0}(a|x)].
    \label{eq:dyn-barmu-barpi-mf0-diff}
\end{align}

The CTMFP process really starts from time $t=1$, but it is necessary to define what happens just before this starting time. For $t\in [0,1)$, we define $\bar\pol _{t<1}^{\smf_0} = (\bar\pol_{t<1,n}^{\smf_0})_{n} = (\pol_{t<1,n}^{\smf_0,\BR})_{n}$, where $\pol_{t<1}^{\smf_0,\BR}$ is constant and equal to an arbitrary policy. The induced distribution between time $0$ and $1$ is $\bar \mf_{t<1}^{\smf_0} = \mf_{t<1}^{\smf_0}=\mf^{\smf_0,\pol_{t<1}^{\smf_0}} = (\mf^{\smf_0,\pol_{t<1}^{\smf_0}}_n )_{n \ge 0}$.

\paragraph{Proof of convergence. }

We assume the transition $p$ is independent of the distribution: $x_{n+1} \sim p(.|x_n, a_n)$, and we assume the reward can be split as:
\begin{equation}
    \label{eq:reward-split}
    r(x,a,\smf) = r_A(x,a) + r_M(x,\smf).
\end{equation}
A useful property is the so-called monotonicity condition, introduced by~\citet{lasry2007mean}. 
\begin{definition}
    The MFG is said to be \emph{monotone} if: for all $\smf \neq \smf' \in \sMF$,
\begin{equation}
\label{eq:def-mono-mu-mup}
    \sum_{x} (\smf(x) - \smf'(x)) (r_M(x, \smf) - r_M(x, \smf')) < 0.
\end{equation}
\end{definition}
This condition intuitively means that the agent gets a lower reward if the population density is larger at its current state. 
Monotonicity implies that for every $\smf_0$, there exists at most one MF Nash equilibrium consistent with $\smf_0$; see~\citep{lasry2007mean}. This can be checked by considering the exploitability. 

Here, we are going to use the average exploitability as introduced in~\eqref{eq:avg-exploitability}:
\begin{equation*}
    \bar\cE_{\trainset}(\bar\pol_{t}) = \EE_{\smf_0 \sim \text{UNIFORM}(\trainset)} \big[ \bar\cE(\smf_0, \bar\pol_{t}^{\smf_0}) \big], 
\end{equation*}
where $\bar\pol_{t} = (\bar\pol_{t}^{\smf_0})_{\smf_0 \in \trainset}$ is the uniform distribution over past best responses $(\pol_{s}^{\smf_0,\BR})_{s \in [0,t], \smf_0 \in \trainset}$, and we define in the continuous-time setting:
$$
    \bar\cE(\smf_0, \bar\pol_{t}^{\smf_0}) = \max \limits_{\pol'} J(\smf_0, \pol'; \bar\mf^{\smf_0}_{t}) - \frac{1}{t}\int_{s=0}^t J(\smf_0, \pol_{t}^{\smf_0,\BR}; \bar\mf_{t}^{\smf_0}).
$$

\begin{theorem}[Theorem~\ref{thm:convergence-monotone-mf0} restated]
    Assume the reward is separable, the MFG is monotone, and the transition is independent of the population. Then, for every $\smf_0 \in \trainset$, $\bar\cE(\bar\pol_{t}) \in O(1/t)$.
\end{theorem}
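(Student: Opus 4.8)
The plan is to reduce the population-dependent statement to the single--initial-distribution continuous-time Fictitious Play of~\citet{perrin2020fictitious}, and then run a monotonicity-based differential argument for each fixed $\smf_0$. First I would observe that, because the best response at time $t$ is a \emph{population-dependent} policy and both $p$ and $r$ depend on the population only through the current mean-field state, the best-response computation decouples over $\trainset$: along the flow $\bar\mf_t^{\smf_0}$ the population-dependent best response restricts to a population-agnostic best response $\pol_t^{\smf_0,\BR}$ against $\bar\mf_t^{\smf_0}$, and this restriction is intrinsic to $\bar\mf_t^{\smf_0}$ (it is the greedy policy of the stationary MDP with augmented state $(x,\smf)$, as used throughout the paper). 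Hence the CTMFP dynamics $(\bar\mf_t^{\smf_0},\pol_t^{\smf_0,\BR})$ for distinct $\smf_0\in\trainset$ evolve independently, $\bar\cE_{\trainset}(\bar\pol_t)=\frac{1}{|\trainset|}\sum_{\smf_0\in\trainset}\bar\cE(\smf_0,\bar\pol_t^{\smf_0})$, and since $\trainset$ is finite it suffices to show $\bar\cE(\smf_0,\bar\pol_t^{\smf_0})=O(1/t)$ for a fixed $\smf_0$, taking the worst constant at the end.

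Next I would use separability and the population-independent transition to linearize the coupling. Since the agent's own state-distribution flow $\nu^\spol$ does not depend on the population, I can write $J(\smf_0,\spol;\mf)=C(\spol)+G(\nu^\spol,\mf)$, where $C(\spol)=\sum_n\gamma^n\langle \nu_n^\spol\otimes\spol_n,r_A\rangle$ is population-free and $G(\nu,\mf)=\sum_n\gamma^n\sum_x \nu_n(x)\,r_M(x,\mf_n)$ is linear in $\nu$. The single-agent flow induced by $\pol_t^{\smf_0,\BR}$ is exactly the best-response flow $\mf_t^{\smf_0,\BR}$, and the mixture identity $\bar\mf_t^{\smf_0}=\frac1t\int_0^t\mf_s^{\smf_0,\BR}\,ds$ together with linearity of $G$ in its first argument gives $\frac1t\int_0^t J(\smf_0,\pol_s^{\smf_0,\BR};\bar\mf_t^{\smf_0})\,ds=\frac1t\int_0^tC(\pol_s^{\smf_0,\BR})\,ds+G(\bar\mf_t^{\smf_0},\bar\mf_t^{\smf_0})$. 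Writing $V^*(t)=\max_{\pol'}J(\smf_0,\pol';\bar\mf_t^{\smf_0})=J(\smf_0,\pol_t^{\smf_0,\BR};\bar\mf_t^{\smf_0})$, the exploitability is $\bar\cE(\smf_0,\bar\pol_t^{\smf_0})=V^*(t)-\frac1t\int_0^tJ(\smf_0,\pol_s^{\smf_0,\BR};\bar\mf_t^{\smf_0})\,ds\ge 0$.

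The core step is to differentiate $t\,\bar\cE(\smf_0,\bar\pol_t^{\smf_0})=tV^*(t)-\int_0^tJ(\smf_0,\pol_s^{\smf_0,\BR};\bar\mf_t^{\smf_0})\,ds$. Using $\frac{d}{dt}\bar\mf_t^{\smf_0}=\frac1t(\mf_t^{\smf_0,\BR}-\bar\mf_t^{\smf_0})$ from~\eqref{eq:dyn-barmu-mf0}, an envelope (Danskin) argument gives $\frac{d}{dt}V^*(t)=\frac1t\,\partial_\mf J(\smf_0,\pol_t^{\smf_0,\BR};\bar\mf_t^{\smf_0})[\mf_t^{\smf_0,\BR}-\bar\mf_t^{\smf_0}]$, while differentiating the integral yields $V^*(t)$ plus $\frac1t\int_0^t\partial_\mf J(\smf_0,\pol_s^{\smf_0,\BR};\bar\mf_t^{\smf_0})[\mf_t^{\smf_0,\BR}-\bar\mf_t^{\smf_0}]\,ds$. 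Because the $C$-term is population-free ($\partial_\mf C=0$) and $G$ is linear in the agent's distribution, the $V^*(t)$ contributions cancel and the mixture identity collapses the $s$-integral (replacing $\nu^{\pol_s^{\smf_0,\BR}}$ averaged over $s$ by $\bar\mf_t^{\smf_0}$), leaving
\[
\frac{d}{dt}\big[t\,\bar\cE(\smf_0,\bar\pol_t^{\smf_0})\big]
=\sum_n\gamma^n\sum_{x,y}\delta_n(x)\,\frac{\partial r_M(x,\bar\mf_{t,n}^{\smf_0})}{\partial\mu(y)}\,\delta_n(y),
\qquad \delta_n:=\mf_{t,n}^{\smf_0,\BR}-\bar\mf_{t,n}^{\smf_0}.
\]
The monotonicity condition~\eqref{eq:def-mono-mu-mup}, taken in the differential limit $\mu'=\mu+h\delta_n$, $h\to0^+$, says precisely that the Jacobian of $r_M$ is negative semidefinite on simplex-tangent directions (and $\delta_n$ is one, since $\sum_x\delta_n(x)=0$); hence each quadratic form is $\le 0$ and $t\,\bar\cE(\smf_0,\bar\pol_t^{\smf_0})$ is non-increasing on $[1,\infty)$. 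As it is finite at $t=1$ (bounded rewards, $\gamma<1$) and $\bar\cE\ge 0$, I get $\bar\cE(\smf_0,\bar\pol_t^{\smf_0})\le \bar\cE(\smf_0,\bar\pol_1^{\smf_0})/t=O(1/t)$, and averaging over $\trainset$ concludes.

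I expect the main obstacle to be the analytic rigor of the middle step rather than the algebra: justifying the envelope theorem so that $t\mapsto V^*(t)$, and hence $t\,\bar\cE$, is absolutely continuous and its almost-everywhere derivative may be integrated despite possibly non-unique best responses; and converting the stated finite-difference monotonicity into the differential quadratic-form negativity, which implicitly requires enough regularity of $\smf\mapsto r_M(\cdot,\smf)$ (otherwise one must retain finite differences throughout the derivative computation). A secondary point to verify carefully is the decoupling used in the reduction, namely that a single population-dependent best response simultaneously realizes the best response to every flow $\bar\mf_t^{\smf_0}$, $\smf_0\in\trainset$, which follows from the optimal action at an augmented state $(x,\smf)$ being intrinsic to $\smf$.
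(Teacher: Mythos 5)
Your proposal is correct and follows essentially the same route as the paper: both differentiate the exploitability along the continuous-time Fictitious Play flow, use separability plus the population-independent transition to reduce the derivative to a quadratic form in $\nabla_\mu r_M$ evaluated at $\delta_n=\mf_{t,n}^{\smf_0,\BR}-\bar\mf_{t,n}^{\smf_0}$, and invoke the differential form of monotonicity to conclude that this form is non-positive, yielding the $O(1/t)$ rate; your $\frac{d}{dt}\bigl[t\,\bar\cE\bigr]\le 0$ is algebraically identical to the paper's $\frac{d}{dt}\bar\cE\le-\frac1t\bar\cE$. The regularity caveats you flag (envelope theorem, passing from the finite-difference monotonicity to the quadratic-form statement) are indeed present and are glossed over in the paper as well.
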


\begin{proof}%

We follow the proof strategy of~\citet{perrin2020fictitious}, adapted to our setting. To alleviate the notation, we denote $\langle f,g\rangle_{_\actions}\, = \sum_{a \in \actions} f(a)g(a)$ for two functions $f,g$ defined on $\actions$, and similarly for $\langle \cdot,\cdot \rangle_{_\states}\,$. 
We also denote:
$
    r^{\spol}(x,\smf) = \langle \spol(\cdot|x), r(x,\cdot,\smf)\rangle_{_\actions}\,.
$

We first note that, by the structure of the reward function given in~\eqref{eq:reward-split},

$$
    \nabla_{\mu}r^{\pol_{t,n}^{\smf_0,\BR}}(x, \bar\mf_{t,n}^{\smf_0}) = \nabla_{\mu}r_M(x, \bar\mf_{t,n}^{\smf_0}) \textrm{ and } \nabla_{\mu}r^{\bar\pol_{t,n}^{\smf_0}}(x, \bar\mf_{t,n}^{\smf_0}) = \nabla_{\mu}r_M(x, \bar\mf_{t,n}^{\smf_0}).
$$

Moreover, using~\eqref{eq:dyn-barmu-barpi-mf0-diff} and~\eqref{eq:dyn-barmu-mf0} respectively, we have, for every $x \in \states$,
\begin{align*}
    - \langle\frac{d}{dt} \bar\pol_{t,n}^{\smf_0}(.|x), r(x,. ,\bar\mf_{t,n}^{\smf_0}) \rangle_{_\actions}\, \bar\mf_{t,n}^{\smf_0}(x)
    &=
    -\frac{1}{t}r^{\pol_{t,n}^{\smf_0,\BR}}(x,\bar\mf_{t,n}^{\smf_0})\mf_{t,n}^{\smf_0,\BR}(x)+\frac{1}{t} r^{\bar\pol_{t,n}^{\smf_0}}(x,\bar\mf_{t,n}^{\smf_0})\mf_{t,n}^{\smf_0,\BR}(x),
    \\
    - r^{\bar\pol_{t,n}^{\smf_0}}(x, \bar\mf_{t,n}^{\smf_0}) \frac{d}{dt} \bar\mf_{t,n}^{\smf_0}(x)
    &=
    \frac{1}{t}r^{\bar\pol_{t,n}^{\smf_0}}(x, \bar\mf_{t,n}^{\smf_0})\bar\mf_{t,n}^{\smf_0}(x) - \frac{1}{t}r^{\bar\pol_{t,n}^{\smf_0}}(x, \bar\mf_{t,n}^{\smf_0})\mf_{t,n}^{\smf_0,\BR}(x).
\end{align*}
Using the definition of exploitability together with the above remarks, we deduce:
\begin{align*}
    \frac{d}{dt} \bar\cE(\smf_0, \bar\pol_{t}^{\smf_0}) 
    &= \frac{d}{dt}\left[\max \limits_{\pol'} J(\smf_0, \pol'; \mf^{\bar\pol_{t}^{\smf_0},\smf_0}) - J(\smf_0, \bar\pol_{t}^{\smf_0}; \mf^{\bar\pol_{t}^{\smf_0},\smf_0})\right]
    \\
    &=\sum \limits_{n=0}^{+\infty} \, \gamma^n\,\sum \limits_{x \in \states} \Big[\langle\nabla_{\mu}r^{\pol_{t,n}^{\smf_0,\BR}}(x, \bar\mf_{t,n}^{\smf_0}), \frac{d}{dt} \bar\mf_{t,n}^{\smf_0}\rangle_{_\states}\,\mf_{t,n}^{\smf_0,\BR}(x) 
    \\ 
    &\qquad\qquad- \langle\nabla_{\mu} r^{\bar\pol_{t,n}^{\smf_0}}(x, \bar\mf_{t,n}^{\smf_0}), \frac{d}{dt} \bar\mf_{t,n}^{\smf_0}\rangle_{_\states}\, \bar\mf_{t,n}^{\smf_0}(x)
    \\
    &\qquad\qquad- \langle\frac{d}{dt} \bar\pol_{t,n}^{\smf_0}(\cdot|x), r(x, \cdot ,\bar\mf_{t,n}^{\smf_0})\rangle_{_\actions}\, \bar\mf_{t,n}^{\smf_0}(x) - r^{\bar\pol_{t,n}^{\smf_0}}(x, \bar\mf_{t,n}^{\smf_0}) \frac{d}{dt} \bar\mf_{t,n}^{\smf_0}(x)\Big]
    \\
    &= \sum \limits_{n=0}^{+\infty} \,\gamma^n\,\sum \limits_{x \in \states} \Big[t \langle \nabla_{\mu}r_M(x, \bar\mf_{t,n}^{\smf_0})), \frac{d}{dt} \bar\mf_{t,n}^{\smf_0}\rangle_{_\states}\, \frac{1}{t}\left(\mf_{t,n}^{\smf_0,\BR}(x)-\bar\mf_{t,n}^{\smf_0}(x)\right)\Big]
    \\
    &\qquad+ \sum \limits_{n=0}^{+\infty} \,\gamma^n\,  \sum \limits_{x \in \states}  \Big[\frac{1}{t}r^{\bar\pol_{t,n}^{\smf_0}}(x, \bar\mf_{t,n}^{\smf_0})\bar\mf_{t,n}^{\smf_0}(x) - \frac{1}{t}r^{\pol_{t,n}^{\smf_0,\BR}}(x,\bar\mf_{t,n}^{\smf_0})\mf_{t,n}^{\smf_0,\BR}(x)\Big]
    \\
    &=  
    - \frac{1}{t} \bar\cE(\smf_0, \bar\pol_{t}^{\smf_0}) + \sum \limits_{n=0}^{+\infty}  \,\gamma^n\, \sum \limits_{x \in \states} \Big[t \langle \nabla_{\mu}r_M(x, \bar\mf_{t,n}^{\smf_0}), \frac{d}{dt} \bar\mf_{t,n}^{\smf_0}\rangle_{_\states}\, \frac{d}{dt} \bar\mf_{t,n}^{\smf_0}(x)\Big],
\end{align*}
where the last term is non-positive. Indeed, the monotonicity condition~\eqref{eq:def-mono-mu-mup} implies that, for all $\tau \geq 0$, we have:
$$
    \sum \limits_{x \in \states} (\bar\mf_{t,n}^{\smf_0}(x) - \bar\mf_{t+\tau,n}^{\smf_0}(x))(r_M(x,\bar\mf_{t,n}^{\smf_0}) - r_M(x,\bar\mf_{t+\tau,n}^{\smf_0}))\leq 0.
$$
The result follows after dividing by $\tau^2$ and letting $\tau$ tend to $0$.

\end{proof}

\end{document}